\documentclass[11pt,a4paper]{article}

\usepackage[dvipdfmx]{graphicx}
\usepackage{amsmath}
\usepackage{amssymb}
\usepackage{ascmac}
\usepackage{authblk}
\usepackage{bm}
\usepackage{caption}
\usepackage{color}
\usepackage[T1]{fontenc}
\usepackage[colorlinks=true]{hyperref}
\usepackage{mathtools}
\usepackage[numbers]{natbib}
\usepackage{stmaryrd}
\usepackage{tgtermes}
\usepackage[capitalize]{cleveref}

\usepackage{amsmath}
\usepackage{amssymb}
\usepackage{amsthm}
\usepackage{bm}
\usepackage{booktabs}
\usepackage[font=footnotesize,labelfont=bf]{caption}
\usepackage{color}
\usepackage{dsfont}
\usepackage{enumitem}
\usepackage[T1]{fontenc}
\usepackage{mathtools}
\usepackage{mathrsfs}
\usepackage[framemethod=tikz]{mdframed}
\usepackage{multirow}
\usepackage{pifont}
\usepackage{rotating}
\usepackage[group-separator={,},group-minimum-digits={3}]{siunitx}
\usepackage{subcaption}
\usepackage{tabularx}
\usepackage{thm-restate}
\usepackage{tikz}
\usepackage{wrapfig}

\hypersetup{
  colorlinks=true,
  linkcolor={red!15!green!35!blue!95},
  citecolor={green!50!black},
  urlcolor={red!50!black}
}
\setlist[itemize]{itemsep=2pt,parsep=0pt,topsep=0pt,leftmargin=10pt}
\setlist[enumerate]{itemsep=2pt,parsep=0pt,topsep=0pt,leftmargin=12pt}
\setlength{\belowcaptionskip}{-12pt}

\allowdisplaybreaks[4]

\newmdenv[
  linecolor=blue!35,
  linewidth=1.0pt,
  backgroundcolor=blue!5,
  frametitlealignment=\center,
]{takehome}

\newtheorem*{theorem*}{Theorem}
\newtheorem*{proof*}{proof}
\newtheorem{definition}{Definition}
\newtheorem*{definition*}{Definition}
\newtheorem{assumption}{Assumption}
\newtheorem*{assumption*}{Assumption}
\newtheorem{lemma}{Lemma}
\newtheorem*{lemma*}{Lemma}

\newtheorem*{proposition*}{Proposition}

\newtheorem*{corollary*}{Corollary}
\newtheorem{remark}{Remark}

\crefname{assumption}{Assumption}{Assumptions}

\renewcommand{\tilde}{\widetilde}

\newcommand{\Ncal}{\mathcal{N}}

\newcommand{\Ebb}{\mathbb{E}}

\newcommand{\Pbb}{\mathbb{P}}

\newcommand{\Rbb}{\mathbb{R}}

\newcommand{\Vbb}{\mathbb{V}}

\newcommand{\Abf}{\mathbf{A}}
\newcommand{\Bbf}{\mathbf{B}}

\newcommand{\Fbf}{\mathbf{F}}

\newcommand{\Hbf}{\mathbf{H}}
\newcommand{\Ibf}{\mathbf{I}}

\newcommand{\Pbf}{\mathbf{P}}

\newcommand{\Rbf}{\mathbf{R}}
\newcommand{\Sbf}{\mathbf{S}}

\newcommand{\Ubf}{\mathbf{U}}

\newcommand{\Wbf}{\mathbf{W}}
\newcommand{\Xbf}{\mathbf{X}}

\newcommand{\Zbf}{\mathbf{Z}}
\newcommand{\abf}{\mathbf{a}}

\newcommand{\ebf}{\mathbf{e}}

\newcommand{\mbf}{\mathbf{m}}

\newcommand{\qbf}{\mathbf{q}}

\newcommand{\xbf}{\mathbf{x}}
\newcommand{\ybf}{\mathbf{y}}
\newcommand{\zbf}{\mathbf{z}}

\newcommand{\Gammabf}{\bm{\Gamma}}

\newcommand{\Thetabf}{\bm{\Theta}}

\newcommand{\Sigmabf}{\bm{\Sigma}}

\newcommand{\gammabf}{\bm{\gamma}}

\newcommand{\mubf}{\bm{\mu}}

\newcommand{\rhobf}{\bm{\rho}}

\newcommand{\omegabf}{\bm{\omega}}

\newcommand{\zerobf}{\mathbf{0}}
\newcommand{\onebf}{\mathbf{1}}

\newcommand{\defeq}{\coloneqq}

\DeclareMathOperator*{\E}{\Ebb}

\DeclareMathOperator{\diag}{\mathrm{diag}}
\DeclareMathOperator{\tr}{\mathrm{tr}}

\newcommand{\indicator}[1]{\mathds{1}_{\{#1\}}}
\newcommand{\set}[1]{\left\lbrace{#1}\right\rbrace}

\newcommand{\inpr}[2]{\left\langle{#1},{#2}\right\rangle}

\newcommand{\diff}[2]{\frac{d{#1}}{d{#2}}}

\renewcommand{\epsilon}{\varepsilon}

\newcommand{\erf}{{\mathop\mathrm{erf}}}

\newcommand{\frob}{{\mathrm{F}}}

\newcommand{\QK}{{\mathrm{QK}}}
\newcommand{\Val}{{\mathrm{V}}}
\newcommand{\WQK}{\Wbf_\QK}
\newcommand{\WQ}{\Wbf_{\mathrm{Q}}}
\newcommand{\WK}{\Wbf_{\mathrm{K}}}
\newcommand{\WV}{\Wbf_\Val}
\newcommand{\WF}{\Wbf_{\mathrm{F}}}

\usepackage[top=30truemm,bottom=30truemm,left=25truemm,right=25truemm]{geometry}

\title{Self-attention Networks Localize When QK-eigenspectrum Concentrates}
\author[1,$\dagger$]{Han Bao}
\author[2]{Ryuichiro Hataya}
\author[3]{Ryo Karakida}
\affil[1]{Kyoto University}
\affil[2]{RIKEN AIP}
\affil[3]{AIST}
\affil[$\dagger$]{\href{bao@i.kyoto-u.ac.jp}{bao@i.kyoto-u.ac.jp}}
\date{\today}

\begin{document}
\maketitle

\setlength{\parskip}{4pt}
\setlength{\abovedisplayskip}{5pt}
\setlength{\belowdisplayskip}{5pt}

\begin{abstract}
    The self-attention mechanism prevails in modern machine learning.
    It has an interesting functionality of adaptively selecting tokens from an input sequence by modulating the degree of attention localization, which many researchers speculate is the basis of the powerful model performance but complicates the underlying mechanism of the learning dynamics.
    In recent years, mainly two arguments have connected attention localization to the model performances.
    One is the rank collapse, where the embedded tokens by a self-attention block become very similar across different tokens, leading to a less expressive network.
    The other is the entropy collapse, where the attention probability approaches non-uniform and entails low entropy, making the learning dynamics more likely to be trapped in plateaus.
    These two failure modes may apparently contradict each other because the rank and entropy collapses are relevant to uniform and non-uniform attention, respectively.
    To this end, we characterize the notion of attention localization by the eigenspectrum of query-key parameter matrices and reveal that a small eigenspectrum variance leads attention to be localized.
    Interestingly, the small eigenspectrum variance prevents both rank and entropy collapse, leading to better model expressivity and trainability.
\end{abstract}

\section{Introduction}
\label{section:introduction}
Transformers have been widely adopted in language modeling~\citep{Vaswani2017NeurIPS}, vision tasks~\citep{Dosovitskiy2021ICLR,Touvron2021ICML}, and speech recognition~\citep{Likhomanenko2021Interspeech}.
A crucial building block in transformers is the attention mechanism, dating back to \citet{Graves2013}, which was initially designed to capture long-range signals in sequential inputs by mixing individual tokens but has also been leveraged to capture general structures of input data.
After the fully-attention-based language model has appeared \citep{Brown2020NeurIPS}, the research community gets interested in the functionality and benefits of the attention.
To mention a few, transformers implicitly prefer hierarchical interpretations of input sequences~\citep{Kharitonov2021ICLR}; store relational knowledge in MLP layers as an associative memory~\citep{Meng2022NeurIPS}; its computational graphs tend to be tree-structured~\citep{Murty2023ICLR}; suddenly capture tree structures of inputs after long training epochs~\citep{Murty2023ACL}.
Theoretically, training dynamics analysis explains how to learn spatially correlated patches by vision transformers (ViT)~\citep{Jelassi2022NeurIPS}, select dominant tokens~\citep{Tian2023NeurIPS}, store information as an associative memory~\citep{Bietti2023NeurIPS}, and select max-margin tokens~\citep{Tarzanagh2023NeurIPS}, whereas \citet{Xie2022ICLR} explains the in-context learning as a process of concept learning in Bayesian inference.

Among many aspects of attention, we specifically focus on \emph{localization}---for a query token, self-attention can select a few relevant tokens only (which we call \emph{localized} attention) or select many tokens \emph{uniformly}.
As attention can be regarded as a token mixer, it plays a pivotal role in studying how it selects tokens to reveal the characteristics of the token embeddings.
To this end, we have the following research questions:
(Q1) \emph{When} is self-attention localized or uniform?
(Q2) \emph{How} does localization affect model performances?

Along this line, previous studies mainly investigated from the model expressivity and training stability perspectives.
On the one hand, \citet{Dong2021ICML} and \citet{Noci2022NeurIPS} initiated the discussion of attention localization and theoretically showed that a network with self-attention layers without skip connections exponentially loses the rank of hidden layers; the fact indicates that \emph{the model expressivity shall be immediately lost with more self-attention layers stacked}.
On the other hand, \citet{Zhai2023ICML} empirically found that attention entropy---averaged Shannon entropy of an attention probability matrix---correlates with training stability.
Specifically, a training loss curve tends to fall into a plateau when attention entropy is low.
Since higher entropy indicates near-uniform attention weights, their finding apparently suggests that \emph{localized attention may lead the learning dynamics to a plateau}.
Up until now, these two failure modes have been discussed independently with slightly different notions of attention localization, and hence, our understanding of the blessing and curse of attention localization remains elusive.

To better comprehend, we characterize self-attention patterns by attention parameter matrices to reconcile the two collapse modes.
We formulate the concept of localization by \emph{signal propagation probability} (\cref{section:direction}), which describes how likely the signal of a specific input token propagates to the gradient of a training objective.
If the signal propagation probability is high for a few numbers of tokens only, attention is regarded to be localized.
We show that the localization mode can be characterized by the eigenspectrum of attention weight matrices (\cref{section:analysis}).
Specifically, attention is localized in the above sense when the eigenspectrum of the query-key parameter matrix has a non-zero mean and a small variance.
Furthermore, the small eigenspectrum variance is relevant to both the rank collapse and entropy collapse (\cref{section:related}), and thus, we give a unified perspective of the two notions of attention collapse.
For this reason, we argue that attention collapse and its performance can be viewed more transparently based on the eigenspectrum variance.
Lastly, we verified the correlation of the eigenspectrum and the model performance in the experiments with the WikiText dataset~\citep{Merity2016} by introducing a regularization scheme called \textsc{LocAteR}.

\begin{figure}
    \centering
    \begin{minipage}{0.23\textwidth}
        \includegraphics[width=\textwidth]{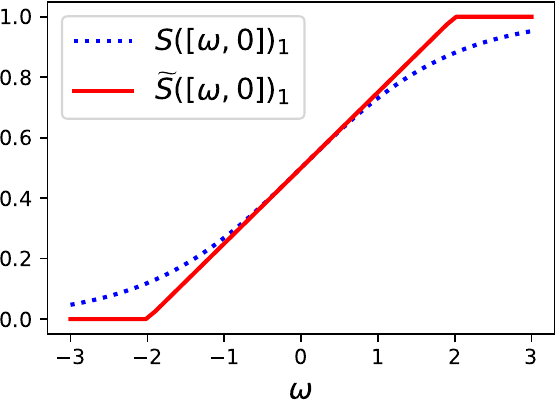}
    \end{minipage}
    \begin{minipage}{0.23\textwidth}
        \includegraphics[width=\textwidth]{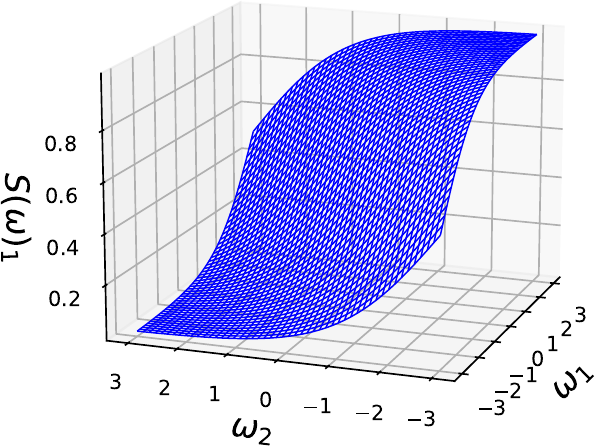} \\
        \includegraphics[width=\textwidth]{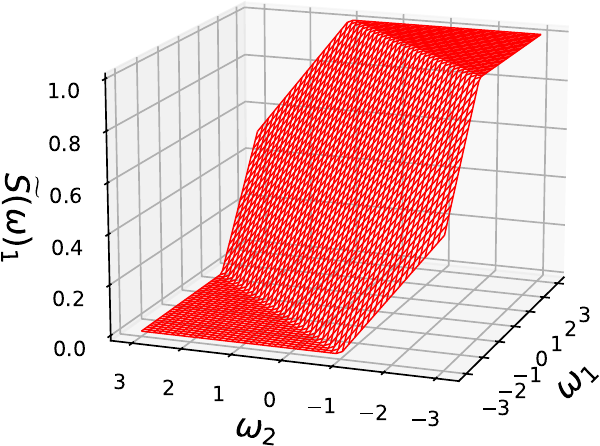}
    \end{minipage}
    \caption{Comparison of softmax $\Sbf$ and the piecewise approximation $\tilde\Sbf$ for two-dimensional inputs.}
    \label{figure:softmax}
\end{figure}

\section{Setup}
\label{section:setup}

\paragraph{Notation.}
We write vectors with all zeros and ones by $\zerobf$ and $\onebf$, respectively, whereas the $i$-th one-hot vector is written as $\ebf^i$.
A vector is written in bold-face like $\abf$, and its $i$-th scalar element is written by non-bold $a_i$.
A matrix is written by capital bold-face like $\Abf$, and $\Abf_i$ denotes its $i$-th column vector unless otherwise noted.
The identity matrix is denoted by $\Ibf$.
The Hadamard product of $\Abf$ and $\Abf$ is written by $\Abf^{\odot2} \defeq \Abf \odot \Abf$.
We write the error function as $\erf$ and use $\erf(-z) = -\erf(z)$ without explicitly mentioning it.
Infinitesimal asymptotic orders $o(\cdot)$ are with respect to the sequence length $T$ unless otherwise noted.

\paragraph{Transformer.}
Let $\Xbf \defeq [\xbf_1 \; \xbf_2 \; \dots \; \xbf_T] \in \Rbb^{d \times T}$ be an input with $T$ tokens, defined later shortly.
We suppose that all input sequences have the same length $T$, and $T$ is occasionally taken sufficiently large.
The $\ell$-th (single-head) self-attention layer is defined as
\begin{align}
    \label{equation:attention_mask}
    \Abf^\ell &\defeq \Sbf\bigg(\frac{(\Xbf^{\ell-1})^\top\WQK\Xbf^{\ell-1}}{\lambda}\bigg), \\
    \label{equation:self_attention}
    \Ubf^\ell &\defeq \WV\Xbf^{\ell-1}\Abf^\ell,
\end{align}
where $\WV \in \Rbb^{d \times d}$ is the value parameters, $\WQK (\defeq \WQ^\top\WK) \in \Rbb^{d \times d}$ is the query-key parameters (with joint parametrization), $\lambda > 0$ is temperature, commonly $\lambda = \sqrt{d}$, and $\Sbf: \Rbb^T \to \Rbb^T$ is the softmax applied for each row.
In this way, each input token in $\Xbf^{\ell-1}$ (embedded by $\WV$) is mixed by $\Abf^\ell$.
Then, the transformer block (without layer normalization \cite{Ba2016}) is defined as
\begin{align*}
    & \Zbf^\ell \defeq \Ubf^\ell + \Xbf^{\ell-1}, \\
    & \Hbf^\ell \defeq \Wbf_{\mathrm{F}_2}\sigma(\Wbf_{\mathrm{F}_1}\Zbf^\ell), \\
    & \Xbf^\ell \defeq \Hbf^\ell + \Zbf^\ell,
\end{align*}
where $\Hbf^\ell$ is a feed-forward net with parameters $\Wbf_{\mathrm{F}_1}, \Wbf_{\mathrm{F}_2} \in \Rbb^{d \times d}$ and an (element-wise) activation $\sigma: \Rbb \to \Rbb$.
We omit the token embedding layer and set $\Xbf^0 \defeq \Xbf$.
There are two common variants of layer normalization positions, Post-LN \cite{Vaswani2017NeurIPS} and Pre-LN \cite{Xiong2020ICML}, which are applied token-wise after the residual connections ($\Zbf^\ell$ and $\Xbf^{\ell+1}$) and before the inputs ($\Xbf^\ell$ and $\Zbf^\ell$), respectively.
Then, the transformer block $\Xbf^\ell$ is stacked $L$ times and $\Fbf(\Xbf) \defeq \Xbf^{L} \in \Rbb^{d \times T}$ is the output.

\paragraph{Learning task.}
We focus on causal language modeling, where a model predicts the next token given contextual tokens.
Formally, given $T$ contextual tokens $\Xbf \in \Rbb^{d \times T}$, the prediction target is the $(T+1)$-th token $\ybf \defeq \xbf_{T+1} \in \Rbb^d$.
With the squared loss, the objective is written as follows:
\[
    J(\Thetabf) \defeq \frac{1}{2}\E\|\ybf - \Fbf(\Xbf)_T\|^2,
\]
where $\Thetabf \defeq (\WV, \WQK, \Wbf_{\mathrm{F}_1}, \Wbf_{\mathrm{F}_2})$ denotes the model parameter set, and the expectation is taken over input sequences $(\Xbf, \ybf)$.
Here, our decoding procedure in consideration is to simply choose the embedded last token $\Fbf(\Xbf)_T \in \Rbb^T$.
The parameters $\Thetabf$ are learned by minimizing $J$.
Note that our analysis considers optimizing the query-key parameters jointly.
Although such joint parameterization is less common in practice, it is convenient for the theoretical derivation of the gradients and has been used in several previous studies \cite{Jelassi2022NeurIPS,Tian2023NeurIPS}.
Interested readers may refer to a recent work revealing that the joint and separate QK-parametrization lead to different implicit regularizations \citep{Tarzanagh2023}.

\paragraph{Picewise linear approximation of softmax.}
In this article, we choose to approximate the softmax function $\Sbf$ by linearization.
For a $T$-dimensional input $\omegabf \in \Rbb^T$, the softmax function is defined as
\[
    S(\omegabf)_i \defeq \frac{\exp(\omega_i)}{\sum_{j \in [T]} \exp(\omega_j)} \text{~~~for all } i \in [T].
\]
For linearization, the Taylor expansion of $S(\omegabf)_i$ around the origin $\inpr{\gammabf^i}{\omega} + \gamma_0^i$ is used, where
\[
    \gammabf^i \defeq \nabla_i\Sbf(\zerobf) = \frac{1}{T}\ebf^i - \frac{1}{T^2}\onebf, \quad
    \gamma_0^i \defeq S(\zerobf)_i = \frac{1}{T}.
\]
Then, we approximate $\Sbf$ by the piecewise linear function such that $S(\omegabf)_i \approx \max\{0, \min\{1, \inpr{\gammabf^i}{\omegabf} + \gamma_0^i\}\} = \inpr{\tilde\gammabf^i}{\omegabf} + \tilde\gamma_0^i$, where
\[
    (\tilde\gammabf^i, \tilde\gamma_0^i) = \begin{cases}
        (\zerobf, 0) & \text{if } \inpr{\gammabf^i}{\omegabf} + \gamma_0^i < 0, \\
        (\gammabf^i, \gamma_0^i) & \text{if } \inpr{\gammabf^i}{\omegabf} + \gamma_0^i \in [0, 1], \\
        (\zerobf, 1) & \text{if } \inpr{\gammabf^i}{\omegabf} + \gamma_0^i > 1. \\
    \end{cases}
\]
In the vector form, the piecewise approximation $\Sbf(\omegabf) \approx \tilde\Sbf(\omegabf)$ is given by
\[
    \tilde\Sbf(\omegabf) = \Gammabf^\top\omegabf + \tilde\gammabf_0, \text{~~where~}
    \begin{cases}
        \Gammabf \defeq [\tilde\gammabf^1 \; \tilde\gammabf^2 \; \dots \tilde\gammabf^T], \\
        \tilde\gammabf_0 = [\tilde\gamma_0^1, \tilde\gamma_0^2, \dots, \tilde\gamma_0^T]^\top.
    \end{cases}
\]
For notational simplicity, the column vectors of $\Gammabf$ are exceptionally denoted by $\tilde\gammabf^i$ with superscripts, for which the $\alpha$-th element is written by $\tilde\gamma^i_\alpha$.
The difference between $\Sbf$ and $\tilde\Sbf$ is illustrated in \cref{figure:softmax}.
Note that a popular alternative to softmax, sparsemax \citep{Martins2016ICML}, is also a piecewise linear function, although the functional form is slightly different from $\tilde\Sbf$.

\begin{remark}
    \label{remark:gamma_expectation}
    Each $(\tilde\gammabf^i, \tilde\gamma_0^i)$ depends on the softmax input $\omegabf$, unlike the Taylor coefficient $(\gammabf^i, \gamma_0^i)$ being independent from $\omegabf$.
    This point matters particularly when we take expectations of terms involving $(\tilde\gammabf^i, \tilde\gamma_0^i)$.
\end{remark}

\begin{remark}
    \label{remark:gamma_asymp}
    When $T$ is sufficiently large, the coefficient vector $\gammabf^i = T^{-1}\ebf^i + o(1)$.
    In this regime, $\gamma^i_\alpha = o(1)$ for any $\alpha \in [T] \setminus \set{i}$, so $\gammabf^i$ behaves like a selector of the $i$-th input.
    Hence, $\gamma^i_i = \gamma_0^i = T^{-1} + o(1)$.
\end{remark}

\section{Signal propagation probability}
\label{section:direction}

We analyze how much each token contributes to the learning dynamics.
To this end, we formalize how much the signal of a specific input token $\xbf_i$ propagates to the gradient $\nabla J$.
Remark that this notion is slightly different from the contribution of an input token $\xbf_i$ to the model output $\Fbf(\Xbf)_j$ analyzed by \citet{Kobayashi2023} recently.

\paragraph{Uniform vs. localized softmax.}
The piecewise linear approximation implies that the $i$-th input signal is propagated to the subsequent blocks when $\inpr{\gammabf^i}{\omegabf} + \gamma_0^i \in [0,1]$;
otherwise, $\tilde{S}(\omegabf)_i = \inpr{\tilde\gammabf^i}{\omegabf} + \tilde\gamma_0^i = \tilde\gamma_0^i$, which hinders the input token $\xbf_i$ from contributing to the self-attention layer~\eqref{equation:self_attention}.
Thus, we will focus on the following quantity.
\begin{definition}[Signal propagation probability]
    \label{definition:signal_propagation_probability}
    Suppose that $\WQK$ is independent from $\Xbf$.
    For $i \in [T]$, the \emph{signal propagation probability} of the $i$-th token is defined as follows:
    \[
        \rho_i \defeq \Pbb\left\{ \inpr{\gammabf^i}{\omegabf} + \gamma_0^i \in [0,1] \right\},
    \]
    where $\omegabf \defeq \Xbf^\top\WQK\xbf_T / \lambda$ and the randomness originates solely from the input tokens $\Xbf$.
\end{definition}
When only a few $\rho_i$ are significantly larger than zero, we can interpret it as localized softmax; in this case, the self-attention~\eqref{equation:self_attention} is dominated by a small number of tokens.
By contrast, most of the tokens contribute to self-attention almost equally if $\rho_i$ takes a similar value across different $i$; this situation is interpreted as uniform softmax.

\begin{figure*}
    \centering
    \includegraphics[width=0.28\textwidth]{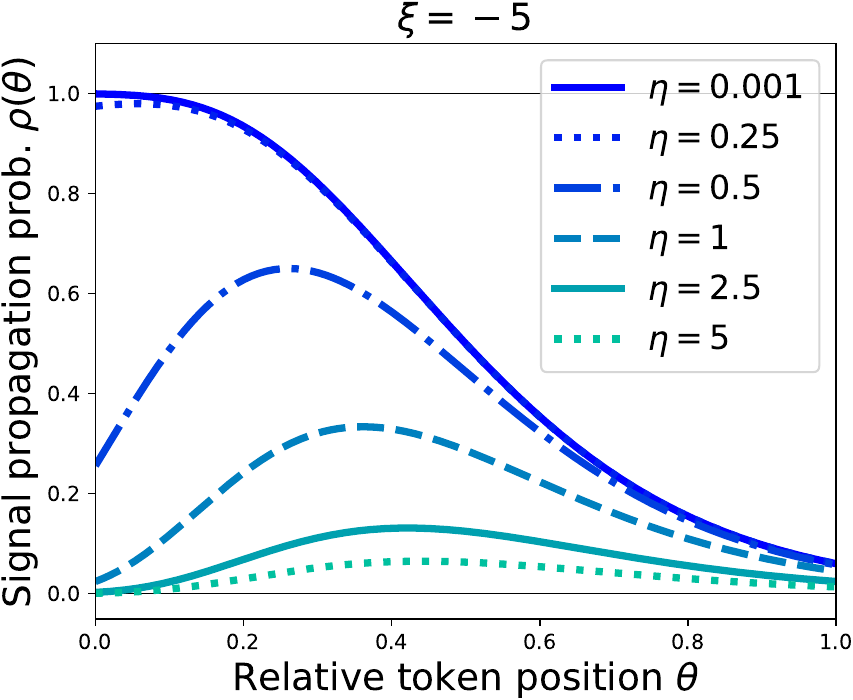}
    \includegraphics[width=0.28\textwidth]{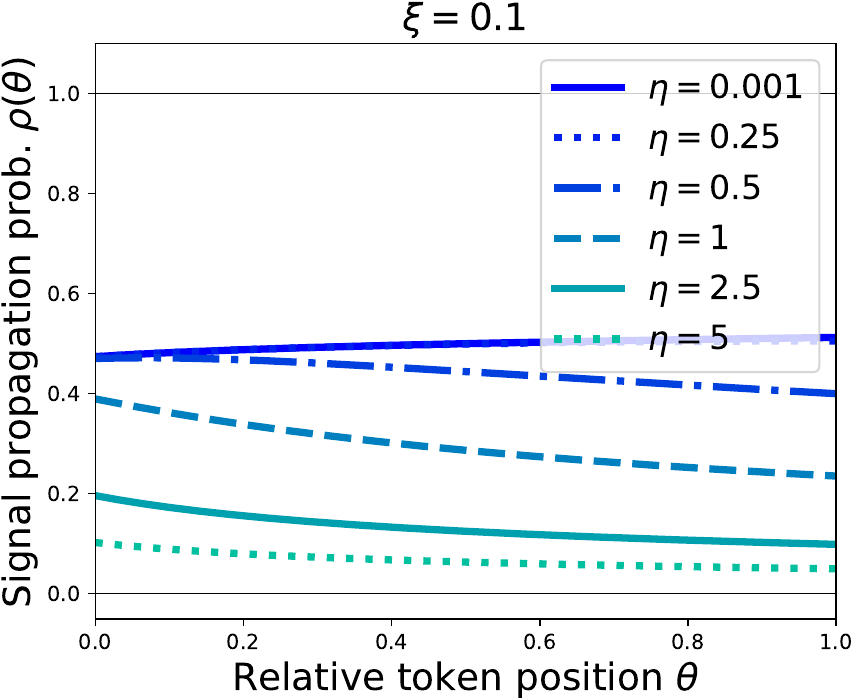}
    \includegraphics[width=0.28\textwidth]{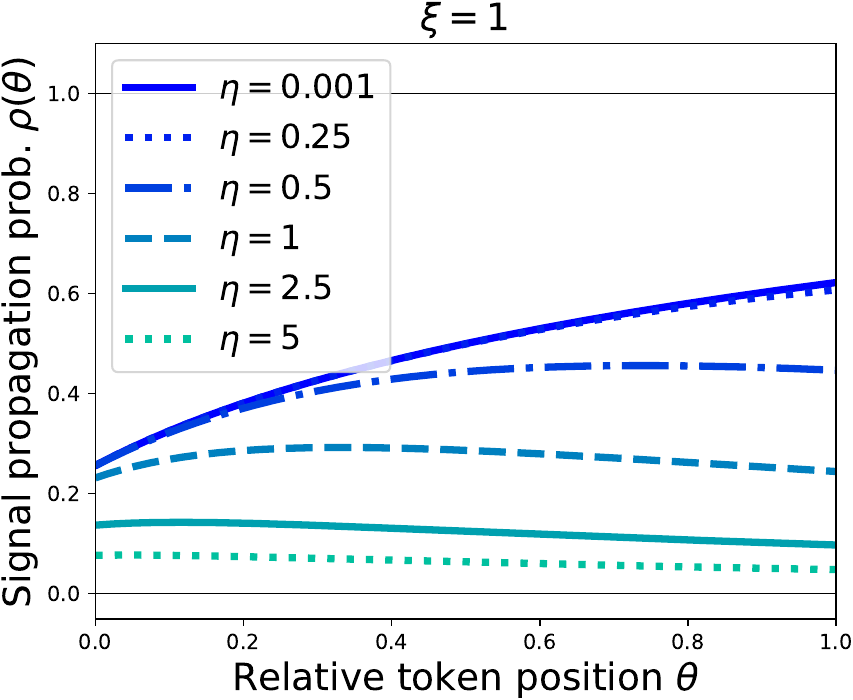} \\
    \includegraphics[width=0.28\textwidth]{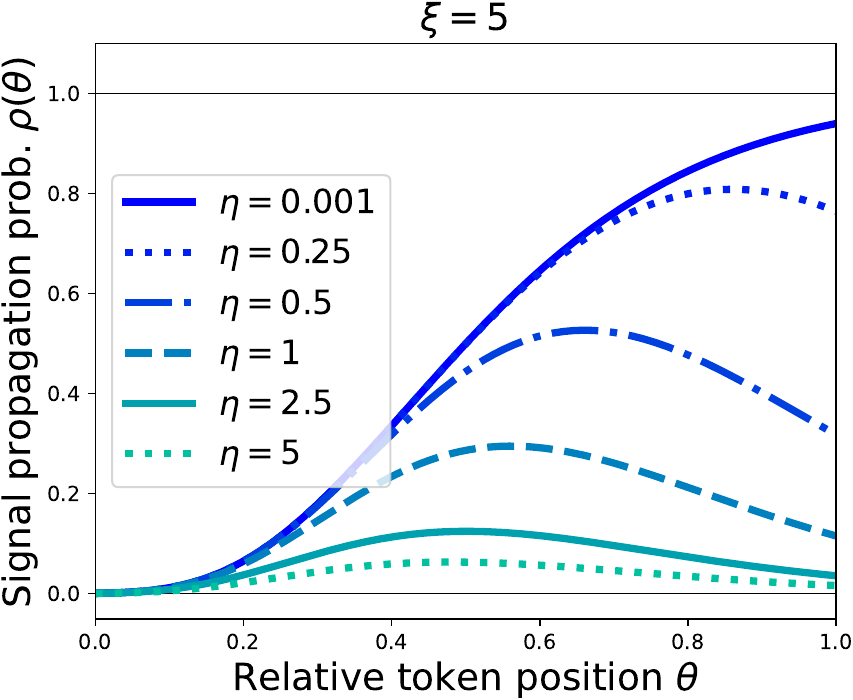}
    \includegraphics[width=0.28\textwidth]{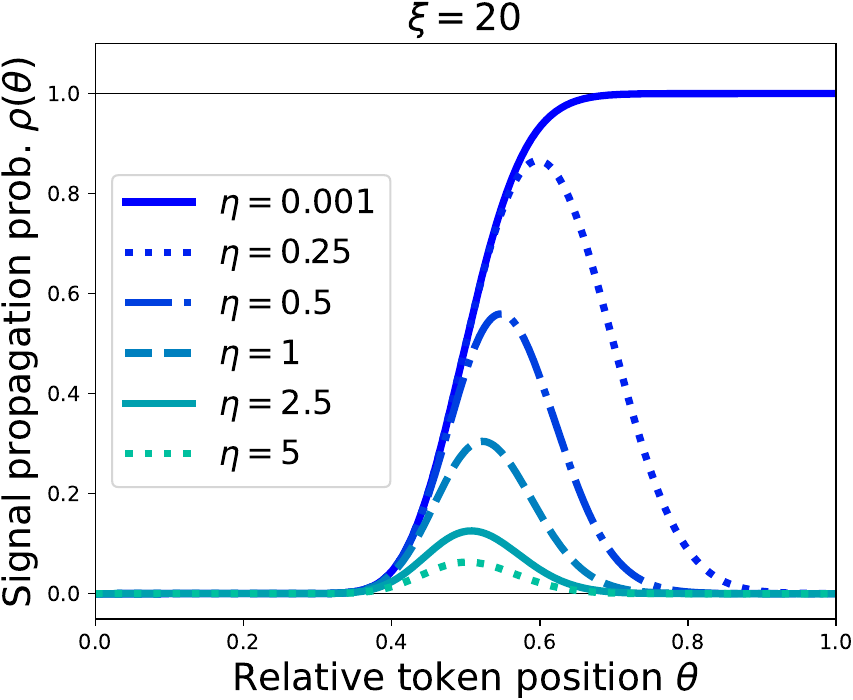}
    \includegraphics[width=0.28\textwidth]{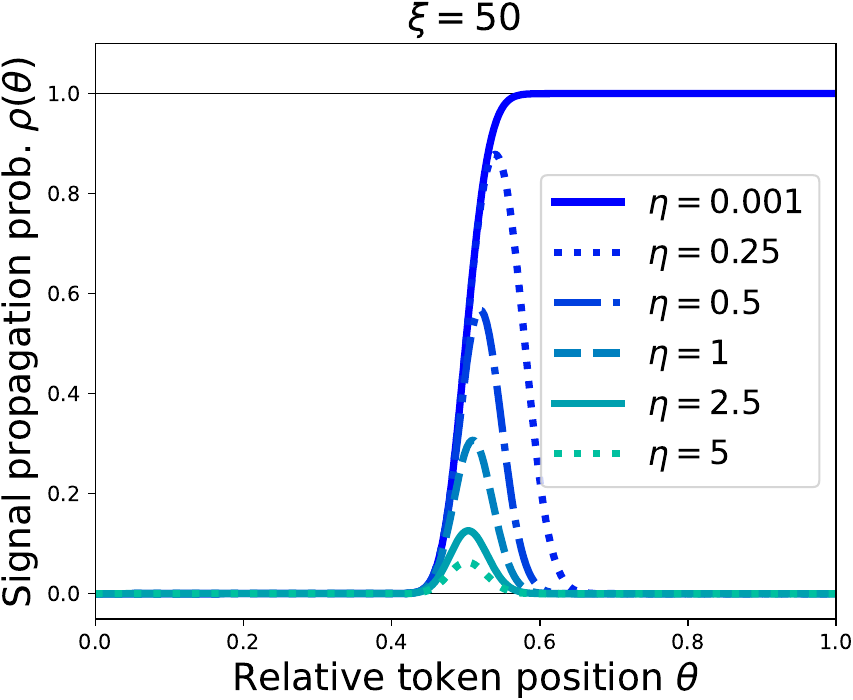}
    \caption{
        The theoretical plots of the signal propagation probability $\rho(\theta)$ with different $\xi = \tr(\Wbf)/\sqrt{\tr(\Wbf^2)}$ and $\eta = \sqrt{\tr(\Wbf^2)}/\lambda^2$.
        The vertical axes indicate relative token position $\theta = i/T$ ($i$: token index, $T$: number of tokens).
        Smaller $\theta$ close to zero and larger $\theta$ close to one correspond to early-site and late-site tokens, respectively.
    }
    \label{figure:spp}
\end{figure*}

\paragraph{Through the lens of gradient.}
The signal propagation probability naturally arises in the gradient.
Since the learning dynamics of causal language modeling is governed by the gradient flow of $J$, we can benefit from deriving the gradient of $J$ to see how attention affects the learning dynamics.
To keep the derivation concise, we consider a $1$-layer transformer (where we drop the superscripts $\ell$) without layer normalization and simplify the feed-forward net $\Hbf$ by supposing the identity activation.
With the approximated softmax $\tilde\Sbf$, the transformer can be written as follows:
\[
    \begin{aligned}
        \Fbf(\Xbf)_T
        &= \WF\{\WV\Xbf\tilde\Sbf(\omegabf) + \xbf_T\} \\
        &= \WF\{\WV\Xbf\Gammabf^\top\omegabf + \WV\Xbf\tilde\gammabf_0 + \xbf_T\},
    \end{aligned}
\]
where $\WF \defeq \Wbf_{\mathrm{F}_2}\Wbf_{\mathrm{F}_1} + \Ibf$ and $\omegabf \defeq \Xbf^\top\WQK\xbf_T / \lambda$.
For this architecture, the QK-gradient is computed:
\begin{equation}
    \label{equation:qk_grad}
    \begin{aligned}
        \nabla_{\WQK}J
        &= \lambda^{-2}\E[\Xbf\Gammabf\Pbf\Gammabf^\top\Xbf^\top\WQK\xbf_T\xbf_T^\top] \\
        &\phantom{=} + \lambda^{-1}\E[\Xbf\Gammabf\Pbf\tilde\gammabf_0\xbf_T^\top]
        + \lambda^{-1}\E[\Xbf\Gammabf\qbf\xbf_T^\top],
    \end{aligned}
\end{equation}
where
\[
    \begin{aligned}
        \Pbf &\defeq \Xbf^\top\WV^\top\WF^\top\WF\WV\Xbf, \\
        \qbf &\defeq \Xbf^\top\WV^\top\WF^\top(\WF\xbf_T - \ybf).
    \end{aligned}
\]
When $T$ is sufficiently large, we can drop asymptotically negligible terms with respect to $T$ as detailed in \cref{section:derivation}, and the QK-gradient \eqref{equation:qk_grad} is simplified as follows:
\begin{equation}
    \label{equation:qk_grad_1}
    \frac{1}{\lambda^2} \!\! \sum_{i,j,\alpha,\beta \in [T]} \!\! \E\left[ \tilde\gamma^i_\alpha\tilde\gamma^j_\beta (\xbf_i^\top\check\Pbf\xbf_j)(\xbf_\beta^\top\WQK\xbf_T)\xbf_\alpha\xbf_T^\top \right],
\end{equation}
where $\check\Pbf \defeq \WV^\top\WF^\top\WF\WV$.

Now, in the gradient term~\eqref{equation:qk_grad_1}, the summands with $\tilde\gamma^i_i$ are asymptotically dominant over those with $\tilde\gamma^i_\alpha$ (with $\alpha \ne i$) because $\tilde\gamma^i_i = T^{-1}$ and $\tilde\gamma^i_\alpha = o(T^{-1})$.
Additionally, the $i$-th signal propagates when $\tilde\gamma^i_i > 0$, which holds iff $\inpr{\gammabf^i}{\omegabf} + \gamma_0^i \in [0,1]$ by definition.
Therefore, we are motivated to check the condition $\inpr{\gammabf^i}{\omegabf} + \gamma_0^i \in [0,1]$ to see whether $\xbf_i$ contributes to the gradient~\eqref{equation:qk_grad}.
The signal propagation probability $\rho_i$ characterizes its strength.

\paragraph{Summary.}
In this section, we introduced the signal propagation probability $\rho_i$, which characterizes how likely a given token $\xbf_i$ contributes to the learning dynamics.
Specifically, $\tilde\gammabf^i \ne \zerobf$ holds more likely with larger $\rho_i$, where $\xbf_i$ contributes to the QK-gradient~\eqref{equation:qk_grad}.
Subsequently, we will analyze the quantity $\rho_i$ to see the behavior of the probability vector $\rhobf \in [0,1]^T$.
When does the mass of $\rhobf$ concentrate to only a few tokens or scatter across most of the tokens?

\section{When does attention localize?}
\label{section:analysis}

We derive the signal propagation probability $\rho_i$ based on the following synthetic data model for the sake of clarity.
\begin{assumption}[Random walk]
    \label{assumption:random_walk}
    The tokens $(\xbf_t)_{t \ge 1}$ are generated by the following Gaussian random walk:
    \[
        \xbf_1 \sim \Ncal(\zerobf, \Sigmabf), \quad
        \xbf_{t+1} \sim \Ncal(\xbf_t, \Sigmabf).
    \]
\end{assumption}
To derive $\rho_i$, we resort to the Gaussian approximation of $\inpr{\gammabf^i}{\omegabf} + \gamma_0^i$.
Define
\[
    \mu^i \defeq \E[\inpr{\gammabf^i}{\omegabf} + \gamma_0^i] \text{~~and~~}
    v^i \defeq \Vbb[\inpr{\gammabf^i}{\omegabf} + \gamma_0^i].
\]
We approximately suppose that $\inpr{\gammabf^i}{\omegabf} + \gamma_0^i \sim \Ncal(\mu^i, v^i)$.
Then, the signal propagation probability is approximated:
\begin{equation*} 
    \rho_i \approx \frac{1}{2}\left\{\erf\left(\frac{1-\mu^i}{\sqrt{2v^i}}\right) + \erf\left(\frac{\mu^i}{\sqrt{2v^i}}\right)\right\}.
\end{equation*}
To leverage this formula, we derive $\mu^i$ and $v^i$.
\begin{restatable}{lemma}{sppmeanvar}
    \label{lemma:spp_mean_var}
    Suppose that $\WQK$ is symmetric and independent from $\Xbf$, and let $\Wbf \defeq \WQK\Sigmabf$.
    Under \cref{assumption:random_walk},
    for $i \in [T]$, the mean $\mu^i$ and variance $v^i$ of $\inpr{\gammabf^i}{\omegabf} + \gamma_0^i$ with the input $\omegabf \defeq \Xbf^\top\WQK\xbf_T / \lambda$ are given as follows:
    \[
        \begin{aligned}
            \mu^i &= \left(\frac{i}{T} - \frac{1}{2}\right)\frac{\tr(\Wbf)}{\lambda} + o(1), \\
            v^i &= \left(\frac{2i^2}{T^2} + \frac{7}{12}\right)\frac{\tr(\Wbf^2)}{\lambda^2} + o(1).
        \end{aligned}
    \]
\end{restatable}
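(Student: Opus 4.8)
The plan is to reduce the claim to two scalar moment computations and then carry them out with Wick's theorem. Writing $Z_i \defeq \inpr{\gammabf^i}{\omegabf} + \gamma_0^i$ and substituting $\gamma^i_j = \frac{1}{T}\delta_{ij} - \frac{1}{T^2}$ gives $Z_i = \frac{1}{T}\omega_i - \frac{1}{T^2}\sum_{j \in [T]}\omega_j + \frac{1}{T}$, where each $\omega_j = \xbf_j^\top\WQK\xbf_T/\lambda$ is a quadratic form in the tokens. Under \cref{assumption:random_walk} every $\xbf_t = \sum_{s \le t}\epsilonbf_s$ is a sum of i.i.d. centered Gaussian increments, so $(\xbf_1, \dots, \xbf_T)$ is jointly centered Gaussian with the Brownian covariance $\E[\xbf_s\xbf_t^\top] = \min(s,t)\Sigmabf$. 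This identity, together with $\Wbf = \WQK\Sigmabf$ and the symmetry of $\WQK$ and $\Sigmabf$, is the only distributional input I need.

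For the mean, each quadratic form contributes $\E[\omega_j] = \lambda^{-1}\tr(\WQK\E[\xbf_T\xbf_j^\top]) = \lambda^{-1}\min(j,T)\tr(\Wbf) = \frac{j}{\lambda}\tr(\Wbf)$. Summing against the weights $\gamma^i_j$ and using $\sum_{j \in [T]} j = \tfrac{T(T+1)}{2}$ yields $\mu^i = \frac{\tr(\Wbf)}{\lambda}(\frac{i}{T} - \frac{T+1}{2T}) + \frac{1}{T}$; discarding the $O(T^{-1})$ pieces gives the claimed $\mu^i = (\frac{i}{T} - \frac12)\frac{\tr(\Wbf)}{\lambda} + o(1)$. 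Note that the natural centered variable $\frac iT - \frac12$ already appears here, and I expect it to reappear in the variance.

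The variance is the substantive part. Since only the quadratic-form part of $Z_i$ is random, $v^i = \sum_{j,k}\gamma^i_j\gamma^i_k\,\mathrm{Cov}(\omega_j,\omega_k)$, so I first compute the pairwise covariance. Expanding $(\xbf_j^\top\WQK\xbf_T)(\xbf_k^\top\WQK\xbf_T)$ in coordinates and applying Isserlis' theorem produces three Wick pairings: the ``disconnected'' pairing reproduces $\E[\omega_j]\E[\omega_k]$ and cancels in the covariance, while the remaining two contract the index loops into traces. Using the symmetry of $\WQK$ and $\Sigmabf$ to collapse $\sum_{a,b} (\WQK)_{ab}\Sigma_{ab} = \tr(\Wbf)$ and both four-index contractions into $\tr(\Wbf^2) = \tr((\WQK\Sigmabf)^2)$, I obtain $\mathrm{Cov}(\omega_j,\omega_k) = \lambda^{-2}(\min(j,k)\,T + jk)\tr(\Wbf^2)$. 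Inserting this into the double sum splits $v^i$ into a rank-one piece $\lambda^{-2}\tr(\Wbf^2)\,(\sum_j \gamma^i_j j)^2$, which tends to $(\frac iT - \frac12)^2\frac{\tr(\Wbf^2)}{\lambda^2}$, and a piece $\lambda^{-2}\tr(\Wbf^2)\,T\sum_{j,k}\gamma^i_j\gamma^i_k\min(j,k)$.

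This last double sum is where the real bookkeeping lives, and I expect it to be the main obstacle. The weights $\gamma^i_j\gamma^i_k$ are $O(T^{-2})$, while $\min(j,k)$ grows like $T$ and the full sum $\sum_{j,k}\min(j,k) = \sum_{m} m^2 = \tfrac{T(T+1)(2T+1)}{6}$ grows like $T^3$; the extra factor of $T$ in front then makes the leading $O(T^3)$ and $O(T^2)$ contributions cancel delicately, leaving only an $O(1)$ remainder. To extract it cleanly I would evaluate the elementary sums $\min(i,i)=i$, $\sum_k\min(i,k) = Ti - \tfrac{i^2 - i}{2}$, and $\sum_{j,k}\min(j,k)$ exactly, substitute $i = \theta T$, and retain only the surviving constant-order terms; crucially, the cross terms $\propto \sum_k\min(i,k)$ contribute at the same $O(1)$ order as the diagonal and the full sum, so none may be dropped prematurely. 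Collecting the surviving constant-order contributions yields a quadratic profile in $i/T$ with leading coefficient $2\tr(\Wbf^2)/\lambda^2$; the three $\min$-sums fix the remaining coefficients and give $v^i$ up to $o(1)$. Finally, the Gaussian-approximation step feeding the $\erf$ formula is separate from this lemma and needs no justification here, since the statement asserts only the two moments.
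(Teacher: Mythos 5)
Your strategy is sound and, modulo presentation, parallels the paper's: both reduce $\mu^i$ and $v^i$ to first and second moments of the quadratic forms $\xbf_j^\top\WQK\xbf_T$ and then sum against the weights $\gamma^i_j$. You work directly from the Brownian covariance $\E[\xbf_s\xbf_t^\top]=\min(s,t)\Sigmabf$ and Isserlis' theorem, which is cleaner than the paper's recursive conditioning through its helper lemmas (\cref{lemma:moments,lemma:rec_moments}); in particular your covariance $\mathrm{Cov}(\omega_j,\omega_k)=\lambda^{-2}(\min(j,k)\,T+jk)\tr(\Wbf^2)$ is correct and makes the $\tr(\Wbf)^2$ terms cancel at the outset rather than at the very end. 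The mean computation is complete and correct.

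The gap is that the one step you yourself flag as ``the real bookkeeping'' is asserted rather than executed, and executing it does not produce the stated formula. With $\gamma^i_j=T^{-1}\delta_{ij}-T^{-2}$ one has $\sum_j\gamma^i_j j\to\theta-\tfrac12$, and using $\min(i,i)=i$, $\sum_k\min(i,k)=Ti-\tfrac{i^2-i}{2}$, and $\sum_{j,k}\min(j,k)=\tfrac{T^3}{3}+O(T^2)$,
\[
T\sum_{j,k}\gamma^i_j\gamma^i_k\min(j,k)=\theta-2\theta+\theta^2+\tfrac13+o(1)=\theta^2-\theta+\tfrac13,
\]
so your decomposition yields $v^i=\bigl(2\theta^2-2\theta+\tfrac{7}{12}\bigr)\tr(\Wbf^2)/\lambda^2=\bigl(2(\theta-\tfrac12)^2+\tfrac{1}{12}\bigr)\tr(\Wbf^2)/\lambda^2$, not the claimed $\bigl(2\theta^2+\tfrac{7}{12}\bigr)\tr(\Wbf^2)/\lambda^2$. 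The extra $-2\theta$ comes from precisely the cross term $-\tfrac{2}{T^3}\sum_k\mathrm{Cov}(\omega_i,\omega_k)$ that you warned must not be dropped; a spot check at $i=T/2$ gives $\tfrac{1}{12}$ rather than $\tfrac{13}{12}$. The mismatch is not in your covariance (e.g.\ it reproduces $\mathrm{Var}(\xbf_T^\top\WQK\xbf_T)=2T^2\tr(\Wbf^2)$ exactly); it traces to the paper's own summation of its term (B2), where $2\sum_{j>i}\{ij+(T-2)i\}=3T^2i-2Ti^2-i^3+O(T^2)$ is recorded as $T^2i-2Ti^2-i^3$. As written, then, your proposal cannot be completed into a proof of the lemma as stated: you must either locate an error in your own $\min$-sum bookkeeping (I could not find one) or conclude that the lemma's variance is missing a $-\tfrac{2i}{T}$ term. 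You should carry the final assembly through explicitly and resolve which.
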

The proof is given in \cref{section:proofs}.
The symmetry of $\WQK$ is assumed for convenience.
In the case of asymmetric $\WQK$, we can redefine the signal propagation probability with the symmetrized matrix $(\WQK + \WQK^\top)/2$.

Recall that $\tr(\Wbf) = \sum_{i \in [d]} w_i$ if we write the eigenvalues of $\Wbf$ by $(w_1, w_2, \dots, w_d)$, and that $\tr(\Wbf)^2 \le d\tr(\Wbf^2)$ holds due to Jensen's inequality.
This implies that
\begin{equation}
    \label{equation:tr_tr2_inequality}
    -\sqrt{d\tr(\Wbf^2)} \le \tr(\Wbf) \le \sqrt{d\tr(\Wbf^2)}.
\end{equation}
Moreover, $\mu^i$ and $v^i$ are determined by the relative token location $i/T$.
By continuously extending $i/T$ to $\theta \in [0,1]$, the signal propagation probability $\rho_i$ can be extended to $\rho: [0,1] \to [0,1]$, defined over relative token locations:
\begin{equation}
    \label{equation:spp_approx}
    \rho(\theta) \defeq \Phi\bigg(\Big(\theta-\frac{1}{2}\Big)\xi; \theta\bigg) - \Phi\bigg(\Big(\theta-\frac{1}{2}\Big)\xi - \frac{1}{\eta}; \theta\bigg),
\end{equation}
where
{
\setlength{\abovedisplayskip}{0pt}
\[
    \begin{aligned}
        &\xi \defeq \frac{\tr(\Wbf)}{\sqrt{\tr(\Wbf^2)}}, \quad
        \eta \defeq \frac{\sqrt{\tr(\Wbf^2)}}{\lambda}, \\
        &\Phi(z;\theta) \defeq \frac{1}{2}\erf\bigg(\frac{z}{\sqrt{\smash[b]{2(2\theta^2+\frac{7}{12})}}}\bigg),
    \end{aligned}
\]
}%
and the parameter ranges are $\xi \in [-\sqrt{d}, \sqrt{d}]$ (due to the bound~\eqref{equation:tr_tr2_inequality}) and $\eta \in (0,\infty)$.
Here, $\xi$ and $\eta$ can be regarded independent (when $\Wbf$ is independent from $\Xbf$) because the eigenspectrum scale $\tr(\Wbf^2)$ can be modulated within the bound~\eqref{equation:tr_tr2_inequality} once the eigenspectrum of $\Wbf$ is given.

\Cref{figure:spp} numerically illustrates $\rho(\theta)$ with different $\xi$ and $\eta$.
From these figures, we obtain a couple of observations.
\begin{itemize}
    \item \textit{Localization.}
    $\rho(\theta)$ concentrates on fewer tokens as $\eta$ increases (see $|\xi| \ge 5$).
    By contrast, $\rho(\theta)$ behaves relatively uniformly regardless of $\eta$ for small $|\xi| \le 1$.

    \item \textit{Late-/middle-/early-site focus.}
    Focus on small $\eta$ such as $\eta = 0.001$.
    As $\xi$ increases to a large positive, $\rho(\theta)$ puts positive weights for only late-site tokens, i.e., $\theta > 0.5$.
    By contrast, as $\xi$ decreases to a negative, $\rho(\theta)$ focuses on early-site tokens, i.e., $\theta < 0.5$.
    When $\eta$ increases (see $\eta \ge 0.5$), $\rho(\theta)$ localizes around $\theta = 0.5$ with sufficiently large $\xi$ (say, $|\xi| \ge 5$), which indicates middle-site focus.

    \item \textit{Vanishing signal.}
    As $\eta$ increases, $\rho(\theta)$ degenerates to zero for any $\theta \in [0,1]$ regardless of $\xi$.
\end{itemize}

\begin{figure}
    \centering
    \includegraphics[width=0.23\textwidth]{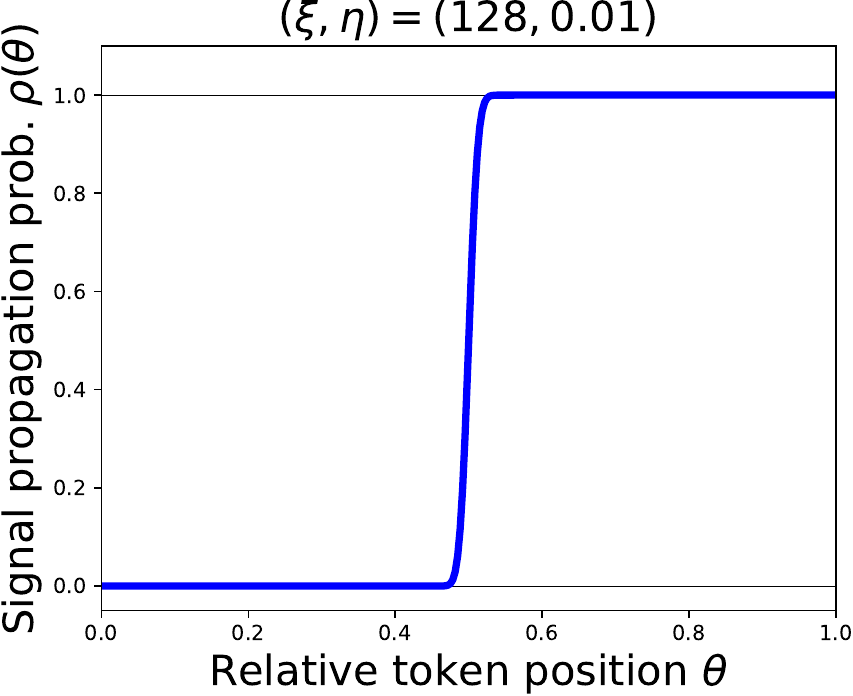}
    \includegraphics[width=0.23\textwidth]{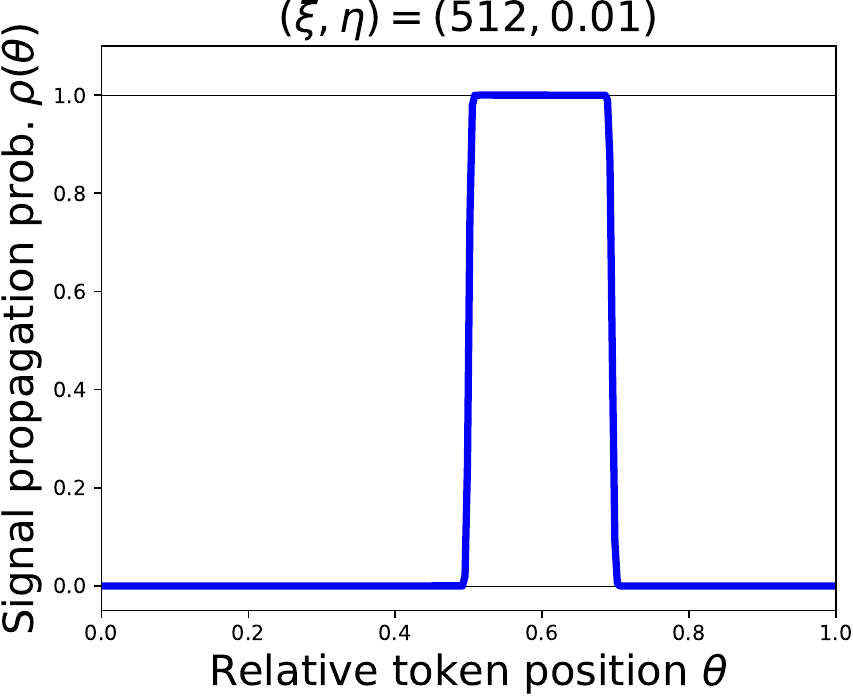}
    \caption{
        The theoretical plots of $\rho(\theta)$.
        For each $\xi = 128, 512$, the product value $\xi\eta = 1.28, 5.12$, respectively.
        The latter is sufficiently larger than the localization threshold $r = 2$ and localized.
    }
    \label{figure:localization}
\end{figure}

\begin{figure}
    \centering
    \includegraphics[width=0.4\textwidth]{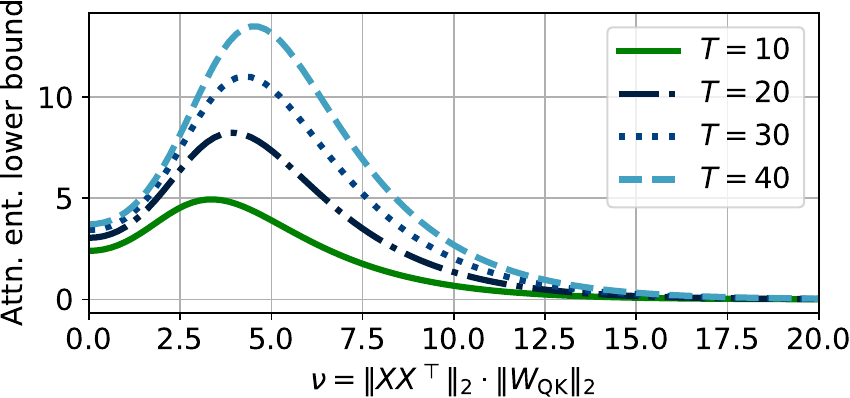}
    \caption{Entropy lower bound \eqref{equation:entropy_lower_bound} by \citet{Zhai2023ICML}.}
    \label{figure:ent_lower_bound}
\end{figure}

\paragraph{How $\rho$ behaves at the limit.}
Subsequently, we claim the above observations formally, which is proved in \cref{section:proofs}.
\begin{restatable}{lemma}{sppprop}
    \label{lemma:spp_property}
    $\rho(\theta)$ satisfies the following properties.
    \begin{enumerate}
        \item \textit{(Late-/middle-site)}
        As $(\xi,\eta) \to (\infty,0)$ with $\xi\eta \to r$,
        \[
            \rho(\theta) \to \begin{cases}
                \indicator{\theta \ge \frac{1}{2}} & \text{if } 0 \le r \le 2 \\
                \indicator{\frac{1}{2} \le \theta \le \frac{1}{2}+\frac{1}{r}} & \text{if } r > 2
            \end{cases}
            .
        \]

        \item \textit{(Early-/middle-site)}
        As $(\xi,\eta) \to (-\infty,0)$ with $\xi\eta \to r$,
        \[
            \rho(\theta) \to \begin{cases}
                \indicator{\theta \le \frac{1}{2}} & \text{if } -2 \le r < 0 \\
                \indicator{\frac{1}{2}+\frac{1}{r} \le \theta \le \frac{1}{2}} & \text{if } r < -2
            \end{cases}
            .
        \]

        \item \textit{(Uniformity)}
        Fix $\eta$ as a finite value.
        As $|\xi| \to 0$, $|\rho'(\theta)| \to 0$ for any $\theta \in [0,1]$.

        \item \textit{(Vanishing signal)}
        Fix $\xi$ as a finite value.
        As $\eta \to \infty$, $\rho(\theta) \to 0$ for any $\theta \in[0,1]$.
    \end{enumerate}
\end{restatable}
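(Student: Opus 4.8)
The plan is to split the two $\erf$-terms in \eqref{equation:spp_approx}. Write $\rho(\theta)=\Phi(a;\theta)-\Phi(b;\theta)$ with $a\defeq(\theta-\tfrac12)\xi$, $b\defeq(\theta-\tfrac12)\xi-\tfrac1\eta$, and $s(\theta)\defeq\sqrt{2(2\theta^2+\tfrac{7}{12})}$, so that $\Phi(z;\theta)=\tfrac12\erf(z/s(\theta))$. The single fact I would use throughout is that $s$ is continuous and bounded on $[0,1]$, with $s(\theta)\in[\sqrt{7/6},\sqrt{31/6}]$, hence bounded away from $0$ and $\infty$. Since $\erf(\pm\infty)=\pm1$ and $s$ stays bounded, the limit of each $\Phi(\cdot;\theta)$ is governed solely by the sign and divergence of its first argument; this is what drives parts (1), (2), and (4), while part (3) needs a derivative estimate.

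For part (1), under $(\xi,\eta)\to(\infty,0)$ with $\xi\eta\to r\ge0$, I track the two shifts. The shift $a=(\theta-\tfrac12)\xi$ diverges to $\pm\infty$ according to $\sign(\theta-\tfrac12)$, so $\Phi(a;\theta)\to\tfrac12\sign(\theta-\tfrac12)$. Factoring $b=\tfrac1\eta[(\theta-\tfrac12)\xi\eta-1]\to\tfrac1\eta[(\theta-\tfrac12)r-1]$ shows $|b|\to\infty$ with the sign of $(\theta-\tfrac12)r-1$, so $\Phi(b;\theta)\to\tfrac12$ if $(\theta-\tfrac12)r>1$ and $-\tfrac12$ if $(\theta-\tfrac12)r<1$. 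Enumerating $\theta\lessgtr\tfrac12$ against the condition $(\theta-\tfrac12)r\lessgtr1$ yields the claimed indicators; the threshold $r=2$ surfaces because the upper cutoff $\tfrac12+\tfrac1r$ lies inside $(\tfrac12,1)$ iff $r>2$, and otherwise exits $[0,1]$ so that $\rho\to1$ throughout the late half $[\tfrac12,1]$. Part (2) is the mirror image: with $\xi\to-\infty$ the sign of $a$ flips and the cutoff $\tfrac12+\tfrac1r$ (now $r<0$) enters $(0,\tfrac12)$ iff $r<-2$, giving the symmetric threshold. The only loose ends are the measure-zero points $\theta=\tfrac12$ (and $\theta=\tfrac12+\tfrac1r$), where the limit is the half-value; I would state the convergence pointwise off this set.

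Part (4) is immediate from the same decomposition: with $\xi$ fixed and $\eta\to\infty$ the shifts merge, $b=a-\tfrac1\eta\to a$, so by the mean value theorem $|\rho(\theta)|\le\tfrac1\eta\sup_z|\partial_z\Phi(z;\theta)|=\tfrac{1}{\sqrt\pi\,s(\theta)\,\eta}\to0$ uniformly in $\theta$. Part (3) is the delicate one and I expect it to be the main obstacle. Differentiating gives $\rho'(\theta)=\tfrac{\xi}{\sqrt\pi\,s}(e^{-a^2/s^2}-e^{-b^2/s^2})-\tfrac{s'(\theta)}{\sqrt\pi\,s^2}(a\,e^{-a^2/s^2}-b\,e^{-b^2/s^2})$. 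The first, localization-driving term carries an explicit factor $\xi$ and thus vanishes as $\xi\to0$ while its bracket stays bounded—this part is routine. The real work is the second term, induced by $s'(\theta)\neq0$: it lacks the factor $\xi$, and as $\xi\to0$ one has $a\to0$ but $b\to-\tfrac1\eta$, so proving $|\rho'(\theta)|\to0$ hinges on quantifying and controlling this variance-driven residual in the regime of interest.
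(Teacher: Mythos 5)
Parts 1, 2, and 4 of your proposal are correct and follow essentially the same route as the paper: pointwise limits of the two $\erf$ terms using the boundedness of $s(\theta)$ away from $0$ and $\infty$, with the threshold $r=2$ arising exactly as you describe from whether $\tfrac12+\tfrac1r$ lands inside the unit interval; your mean-value-theorem bound for part 4 is a slightly cleaner packaging of the paper's observation that the two arguments merge as $\eta\to\infty$.

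For part 3, the obstruction you isolate is real, and you are right not to paper over it. Your derivative formula $\rho'(\theta)=\tfrac{\xi}{\sqrt{\pi}\,s}(e^{-a^2/s^2}-e^{-b^2/s^2})-\tfrac{s'}{\sqrt{\pi}\,s^2}(a\,e^{-a^2/s^2}-b\,e^{-b^2/s^2})$ is correct, and as $|\xi|\to 0$ with $\eta$ fixed the second term converges to $-\tfrac{s'(\theta)}{\sqrt{\pi}\,s(\theta)^2}\cdot\tfrac{1}{\eta}e^{-1/(\eta^2 s(\theta)^2)}$, which is nonzero for every $\theta\in(0,1]$. This is consistent with evaluating directly at $\xi=0$: there $\rho(\theta)=\tfrac12\erf\bigl(1/(\eta\,s(\theta))\bigr)$, which is strictly decreasing in $\theta$ on $(0,1]$. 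So claim 3 is false as literally stated. The paper's own proof obtains the bound $|\rho'(\theta)|\le \tfrac{|\xi|}{\sqrt{\pi}}\cdot\tfrac{1}{(2(2\theta^2+7/12))^{3/2}\eta}$ only by factoring $\xi$ out of the entire bracket, including the $-\tfrac1\eta$ contribution coming from $\tfrac{d}{d\theta}\{b(\theta)/s(\theta)\}$ --- but that contribution is $b\,s'/s^2$ evaluated at $b=a-\tfrac1\eta$, and the $-\tfrac1\eta$ piece carries no factor of $\xi$; the factorization is an algebra error. The honest version of part 3 is the one your decomposition actually delivers: only the mean-driven part of $\rho'$ (the part proportional to $\xi$) vanishes, while a variance-driven residual of order $\tfrac{s'(\theta)}{\eta s(\theta)^2}e^{-1/(\eta^2 s(\theta)^2)}$ persists; it is small only when $\eta$ is itself taken very small or very large. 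You should state part 3 in that weakened form rather than try to close the gap.
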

From late-/middle-/early-site focus in \cref{lemma:spp_property}, we see interestingly that $\rho(\theta)$ localizes when $\xi\eta = \tr(\Wbf)/\lambda$ asymptotically deviates from zero significantly so that $|r| \gg 2$.
At this limit, $\rho(\theta)$ concentrates on $\theta = 0.5$, inducing the middle-site focus.
Conversely, attention becomes relatively uniform when $\xi\eta = \tr(\Wbf)/\lambda$ is kept close to zero.
In \cref{figure:localization}, we numerically illustrate this regime: $\rho$ localizes at the middle site when $\eta = 0.02$ (i.e., $\xi\eta = 5.12$).

Let us investigate the limiting condition of $(\xi,\eta)$ for localization:
When is $(\xi,\eta)$ close to the limit $(\infty,0)$ while $\xi\eta \to r \gg 2$?
Here, we focus on the \emph{eigenspectrum} of $\Wbf$ by regarding its eigenvalues $(w_i)_{i\in[d]}$ as being sampled from a distribution with the mean $\tr(\Wbf) = \sum_{i\in[d]}w_i$ and scale $\tr(\Wbf^2) = \sum_{i\in[d]}w_i^2$.
First, $\eta \to 0$ indicates that the scale $\tr(\Wbf^2)$ should be close to zero.
Next, $\xi\eta (=\tr(\Wbf)/\lambda) \to r \gg 2$ means that $\tr(\Wbf) \gg 2\lambda$ at the limit, i.e., $\tr(\Wbf)$ should be significantly away from zero.
By combining them, we tell that $\rho$ localizes when the eigenspectrum concentrates around a non-zero mean.
This happens more likely when the embedding dimension $d$ is excessively large to make the eigenvalue sum $\tr(\Wbf)$ bounded away from zero while keeping the scale $\tr(\Wbf^2)$ close to zero (i.e., keeping every eigenvalue close to zero).
Thus, a larger embedding dimension $d$ is beneficial to drive attention to localization.

Next, from the claim of uniformity in \cref{lemma:spp_property}, we tell that $\rho$ fluctuates less and less with $\xi$ closer to zero.
Hence, $\rho(\theta)$ takes a similar value across different token positions $\theta$ in this limit.
When $\tr(\Wbf) \to 0$, $\rho$ attains this limit.

\paragraph{Summary.}
Wrapping up this section, we obtain answers to (Q1) posed in \cref{section:introduction} under the random walk model.
\begin{takehome}[frametitle={A1: When does attention localize?}]
    \begin{itemize}
        \item $\rho$ \emph{localizes} when $\tr(\Wbf^2)$ is close to zero while $|\tr(\Wbf)|$ is significantly bounded away from zero, i.e., $\Wbf$-eigenspectrum \emph{concentrates to a non-zero mean}.
        \item $\rho$ is \emph{uniform} when $\tr(\Wbf)$ is close to zero while $\tr(\Wbf^2)$ remains finite, i.e., $\Wbf$-eigenspectrum has \emph{the zero mean and a finite variance}.
        \item $\rho$ \emph{degenerates} to zero uniformly when $\tr(\Wbf^2)$ is sufficiently large, i.e., $\Wbf$-eigenspectrum has \emph{an infinitely large variance}.
    \end{itemize}
\end{takehome}

\section{Are different collapse regimes reconcilable?}
\label{section:related}

\begin{figure*}
    \centering
    \includegraphics[width=0.24\textwidth]{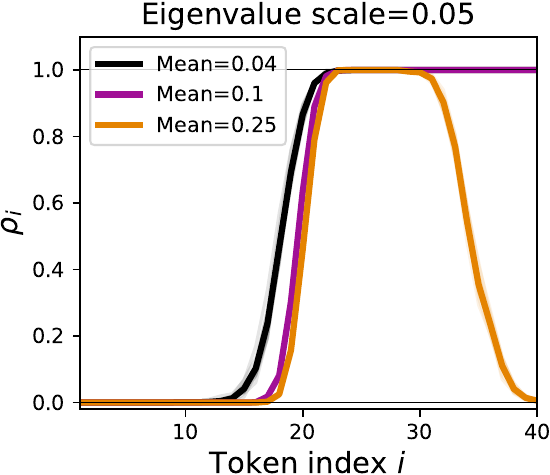} \hfill
    \includegraphics[width=0.24\textwidth]{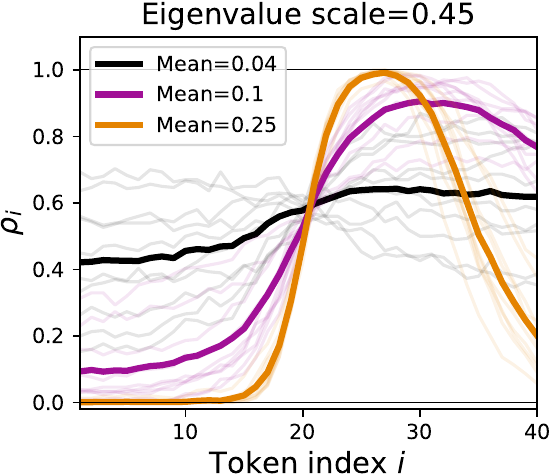} \hfill
    \includegraphics[width=0.24\textwidth]{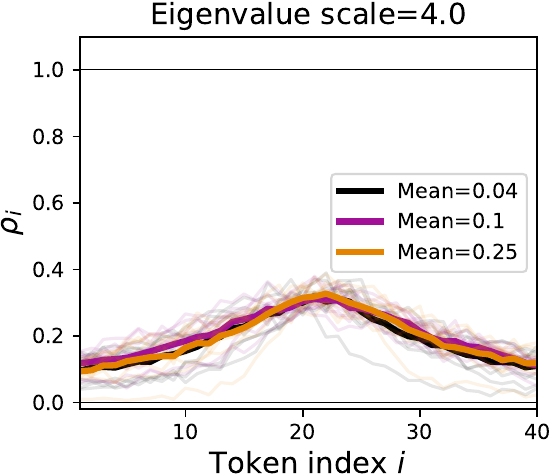} \hfill
    \includegraphics[width=0.24\textwidth]{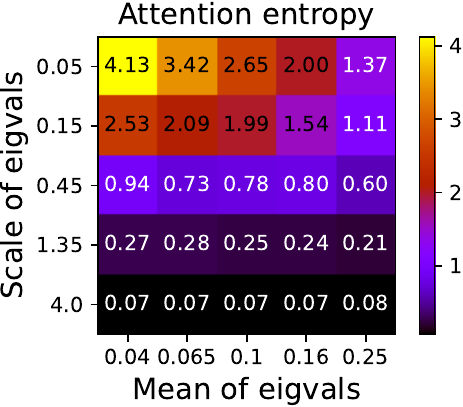} \\
    \includegraphics[width=0.24\textwidth]{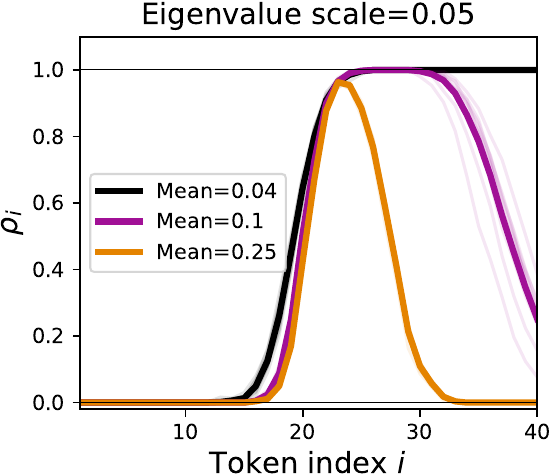} \hfill
    \includegraphics[width=0.24\textwidth]{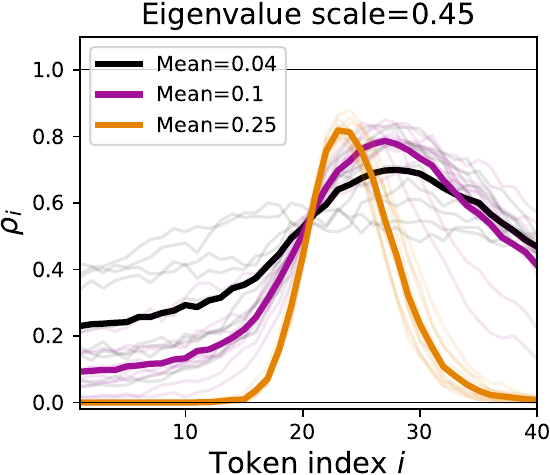} \hfill
    \includegraphics[width=0.24\textwidth]{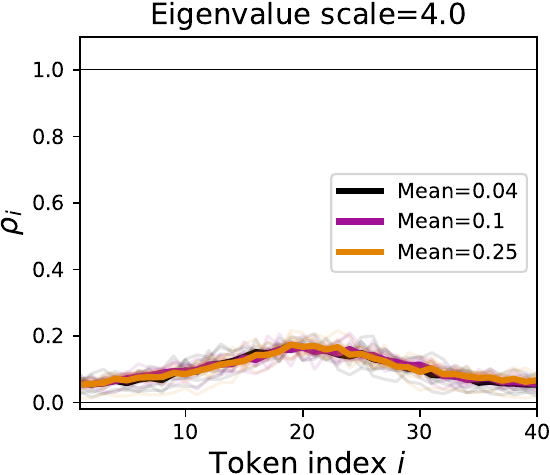} \hfill
    \includegraphics[width=0.24\textwidth]{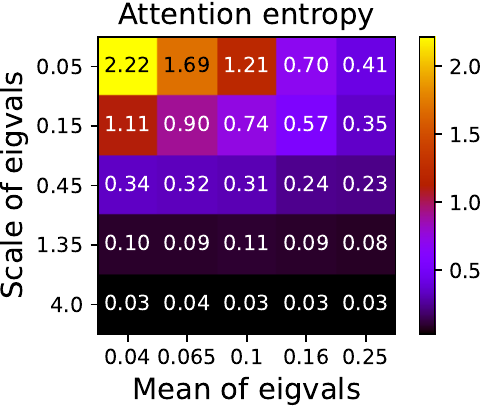}
    \caption{
        Simulated signal propagation probability.
        In the \textbf{top} and \textbf{bottom} rows, the results for the isotropic and anisotropic covariances (the details in the text) are shown, respectively.
        \textbf{(Left)} Signal propagation probability $\rho_i$ computed over repeatedly sampled $300$ random walks (\cref{assumption:random_walk}) with $40$ tokens.
        For each line, $\WQK$ ($d=128$) is sampled $10$ times with the corresponding mean and scale of the eigenvalue distribution, and the averaged $\rho_i$ is denoted by the bold line.
        \textbf{(Right)} The attention entropy \citep{Zhai2023ICML} is computed for $\WQK$ with different eigenvalue mean-scale pairs.
    }
    \label{figure:spp_sample}
\end{figure*}

We discuss the results of our analysis in \cref{section:analysis} and the previous arguments related to attention uniformity.

\paragraph{Connection to rank collapse.}
\citet{Dong2021ICML} showed that self-attention blocks $\Ubf^\ell$ (see \cref{equation:self_attention}) converges to a rank-$1$ matrix $\zbf\onebf^\top$ (for some $\zbf$) with $L \to \infty$ without skip connections or feed-forward blocks, which is called \emph{rank collapse}.%
\footnote{Note that our matrix notation is different from the one used in \citet{Dong2021ICML} so that we chose to let each column of $\Xbf$ store a token, whereas they let each row of $\Xbf$ store a token.}
They argued the importance of avoiding rank collapse for better expressivity because each token embedding in a rank-$1$ self-attention block degenerates to the same.
Hence, the rank collapse is related to the failure mode attributed to the uniformity \emph{after} mixing key tokens by attention, which is slightly different from what we are concerned about---how each token contributes \emph{during} mixing by attention (through the gradient, as discussed in \cref{section:direction}).

\citet[Theorem~2.2]{Dong2021ICML} proved that the convergence rate to a rank-$1$ matrix slows down when the matrix $\ell_1$-norm $\|\WQK\|_1$ is large.
Because we can draw the following connection between $\|\WQK\|_1$ and $|\tr(\Wbf)|$:
\begin{equation}
    \label{equation:l1_norm_lower_bound}
    \frac{|\tr(\Wbf)|}{\sqrt{d}\|\Sigmabf\|_2}
    \le \|\WQK\|_2
    \le \|\WQK\|_\frob
    \le \|\WQK\|_1,
\end{equation}
where the first inequality is due to the bound~\eqref{equation:tr_tr2_inequality} and the Cauchy--Schwarz inequality,
it is advisable to increase $|\tr(\Wbf)|$ under fixed $\tr(\Wbf^2)$ to mitigate the rank collapse.
This is equivalent to reducing the eigenspectrum variance
\begin{equation}
    \label{equation:eigenspectrum_variance}
    d^2\Vbb[w_i] = d^2(\E[w_i^2] - \E[w_i]^2) = d\tr(\Wbf^2) - |\tr(\Wbf)|^2.
\end{equation}
Hence, \emph{minimizing the $\Wbf$-eigenspectrum variance leads to better expressivity}.

\paragraph{Connection to entropy collapse.}
\citet{Zhai2023ICML} introduced a concept called \emph{entropy collapse}, in which the average Shannon entropy of the columns of the attention matrix $\Abf^\ell$ (see \cref{equation:attention_mask}) shrinks.
Intuitively speaking, low attention entropy induces localized attention.
This notion of localization is akin to ours because the attention entropy measures the uniformity the attention is applied to input tokens \emph{during} mixing.
They empirically observed that the training loss falls into a plateau with low attention entropy, which causes training instability of transformers, and hence advocate for keeping attention less peaked during training.

In \citet[Theorem~3.1]{Zhai2023ICML}, the attention entropy is asymptotically lower-bounded for large $T$ by
\begin{equation}
    \label{equation:entropy_lower_bound}
    \ln(1+T\exp(-\nu)) + \frac{\nu\exp(-\nu/2)}{T^{-1} + \exp(-\nu)},
\end{equation}
where $\nu \defeq \|\Xbf\Xbf^\top\|_2\|\WQK\|_2$.
This lower bound is unimodal in $\nu$ and vanishes at $\|\WQK\|_2 \to \infty$ (see \cref{figure:ent_lower_bound}),
so the attention entropy tends to be higher when $\|\WQK\|_2$ is small.
If $|\tr(\Wbf)|$ is not too small, $\|\WQK\|_2$ is lower-bounded (see \cref{equation:l1_norm_lower_bound}) and the attention entropy may be kept close to the peak of the lower bound~\eqref{equation:entropy_lower_bound}.
To mitigate the entropy collapse, it is natural to decrease $\tr(\Wbf^2)$ under fixed $\tr(\Wbf)$ (which is equivalent to minimizing the eigenspectrum variance by \cref{equation:eigenspectrum_variance}) because of the bound
\begin{equation}
    \label{equation:spectral_norm_upper_bound}
    \|\Sigmabf^{-1}\|_\frob\sqrt{\tr(\Wbf^2)}
    \ge |\tr(\WQK)|
    \ge \|\WQK\|_2,
\end{equation}
where the first inequality is due to the Cauchy--Schwarz inequality.
Hence, \emph{minimizing the $\Wbf$-eigenspectrum variance helps the model to avoid the entropy collapse}.

\paragraph{Rank collapse vs. entropy collapse.}
At first sight, the two notions of collapse seem to contradict each other because avoiding the rank collapse leads to diverse token embeddings, whereas avoiding the entropy collapse leads to a uniform token mixer.
Indeed, the matrix $\ell_1$-norm (that decreases under the rank collapse) and the spectral norm (that increases under the entropy collapse) are equivalent norms, and the two modes appear to be incompatible.

However, as we discussed above, this trade-off is reconcilable from the viewpoint of the $\Wbf$-eigenspectrum.
Setting the eigenspectrum mean to be bounded away from zero, we can avoid the rank collapse owing to the bound~\eqref{equation:l1_norm_lower_bound}.
Under a fixed eigenspectrum mean, minimizing the eigenspectrum scale (equivalently, minimizing its variance~\eqref{equation:eigenspectrum_variance}) leads to high attention entropy due to the bound~\eqref{equation:spectral_norm_upper_bound} and the unimodal shape of the entropy lower bound~\eqref{equation:entropy_lower_bound}.
This variance minimization is nothing else but the condition of attention localization.
Eventually, $\rho(\theta)$ localizes and attends to specific sites of tokens, as we showed in \cref{lemma:spp_property}.
Hence, the signal propagation probability offers us a better view of localization.
Let us summarize our second take-home.

\begin{takehome}[frametitle={A2: How does attention localization impact?}]
    \begin{itemize}
        \item \emph{Better expressivity}:
        If $|\tr(\Wbf)|$ is maximized for a fixed $\tr(\Wbf^2)$, the convergence to the rank collapse becomes slow.

        \item \emph{High attention entropy}:
        If $\tr(\Wbf^2)$ is minimized for a fixed $|\tr(\Wbf)|$ bounded away from zero, the attention entropy is increased.
    \end{itemize}
    Both of the above are attributed to minimizing the $\Wbf$-eigenspectrum variance~\eqref{equation:eigenspectrum_variance}.
\end{takehome}

\paragraph{Numerical simulation.}
To see the relationship between the eigenspectrum, $\rho$, and attention entropy, we simulated the signal propagation probability $\rho_i$ using synthesized random walks following \cref{assumption:random_walk} with the isotropic $\Sigmabf = \Ibf$ and anisotropic $\Sigmabf$.
To obtain an anisotropic $\Sigmabf$, we first sampled $\Rbf \in \Rbb^{d \times d}$ from element-wise $\mathrm{Unif}(-2.5,2.5)$ and computed $\Sigmabf = \Rbf^\top\Rbf/d$.
We sampled $300$ sequences with $40$ tokens, and obtained $\WQK$ by generating $128$ eigenvalues $(w_i)_{i \in [d]}$ from $\Ncal(\text{mean}, \text{scale}^2)$ and composed with a sampled orthogonal basis matrix $\Bbf$, by the eigendecomposition formula $\WQK = \Bbf\diag((w_i)_{i})\Bbf^\top$.
The signal propagation probability was averaged over $300$ sequences.

\Cref{figure:spp_sample} shows the averaged $\rho_i$ with different mean-scale pairs and the corresponding attention entropy in the right-most figure.
As seen, $\rho_i$ localizes with smaller scales and larger means, which is consistent with the conclusion in \cref{section:analysis}.
This trend supports the validity of $\WQK$-eigenspectrum as a proxy to $\Wbf$-eigenspectrum.
Moreover, we observe that $\WQK$-eigenspectrum with a fixed mean and scale leads to higher attention entropy.

\begin{figure*}
    \centering
    \includegraphics[width=0.245\textwidth]{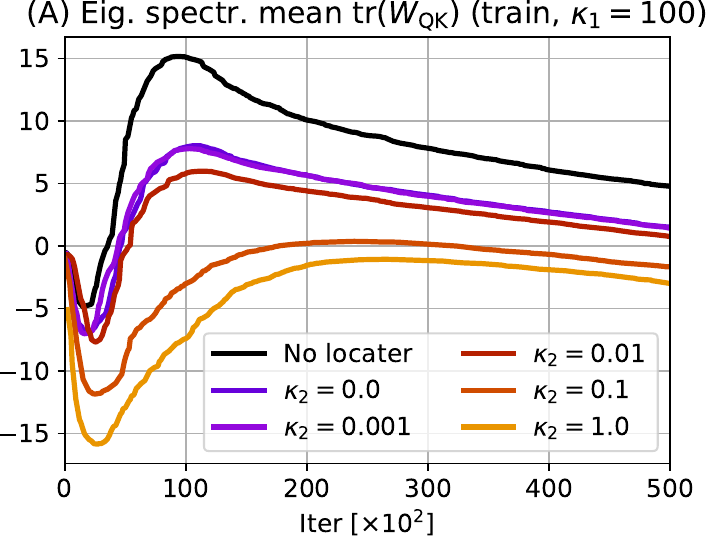} \hfill
    \includegraphics[width=0.245\textwidth]{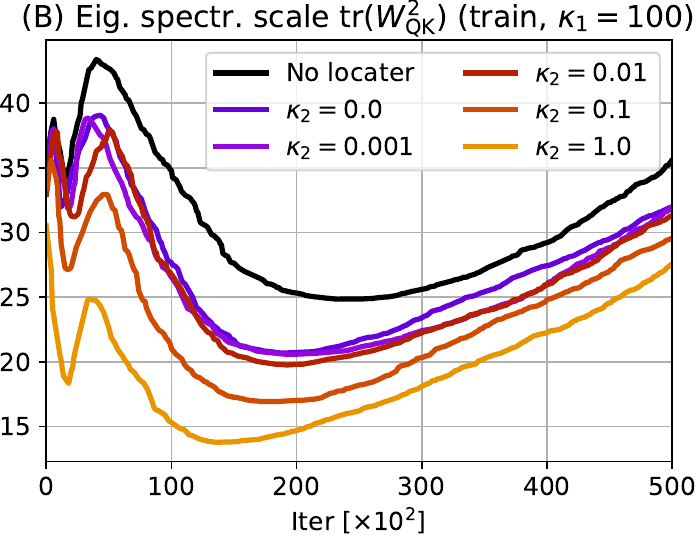} \hfill
    \includegraphics[width=0.245\textwidth]{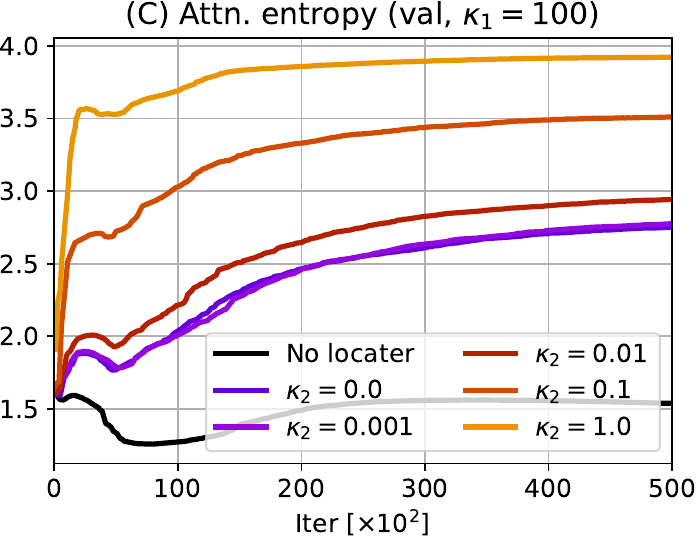} \hfill
    \includegraphics[width=0.245\textwidth]{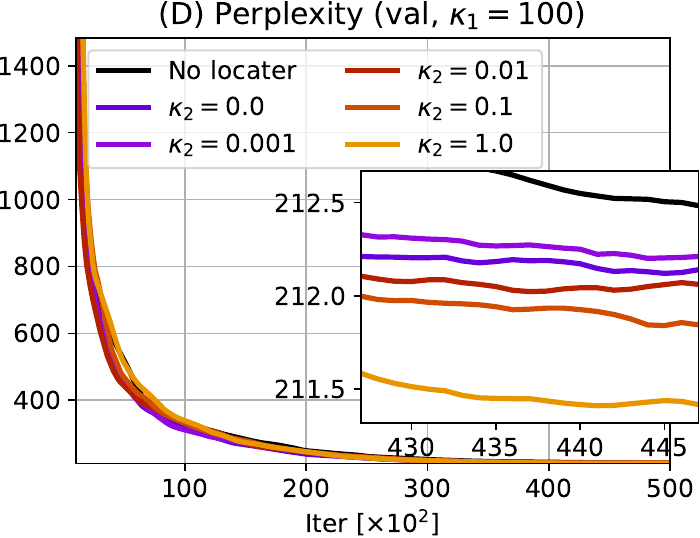}
    \caption{
        Experimental results of language modeling (WikiText-2) with $d=128$ and $1$-layer transformers, fixed $\kappa_1=100$, and varying regularization intensity $\kappa_2$.
        With stronger $\kappa_2$, the eigenspectrum scale shrinks \textbf{(B)}, the attention entropy increases \textbf{(C)}, and the perplexity improves \textbf{(D)}.
        The rightmost figure magnifies the x-range, where the perplexity attains the minimum.
    }
    \label{figure:result_lm_d128_l1}
\end{figure*}

\begin{figure}
    \centering
    \includegraphics[width=0.5\textwidth]{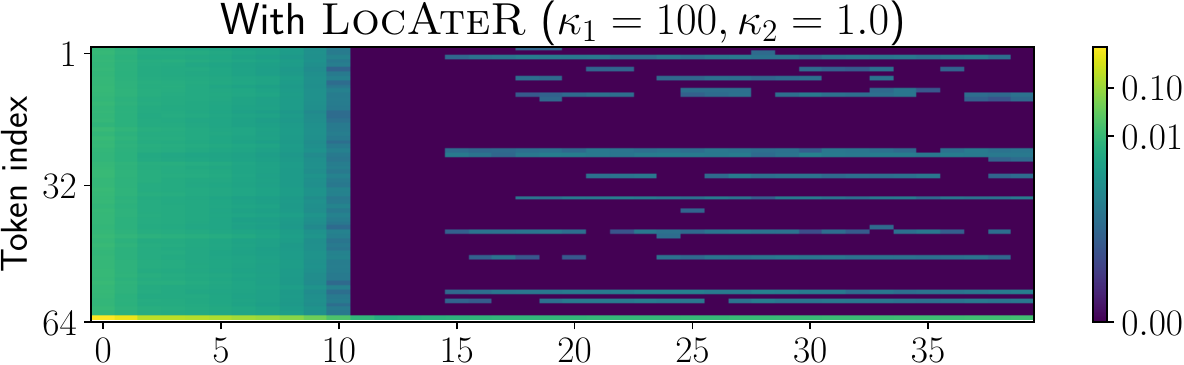}
    \includegraphics[width=0.5\textwidth]{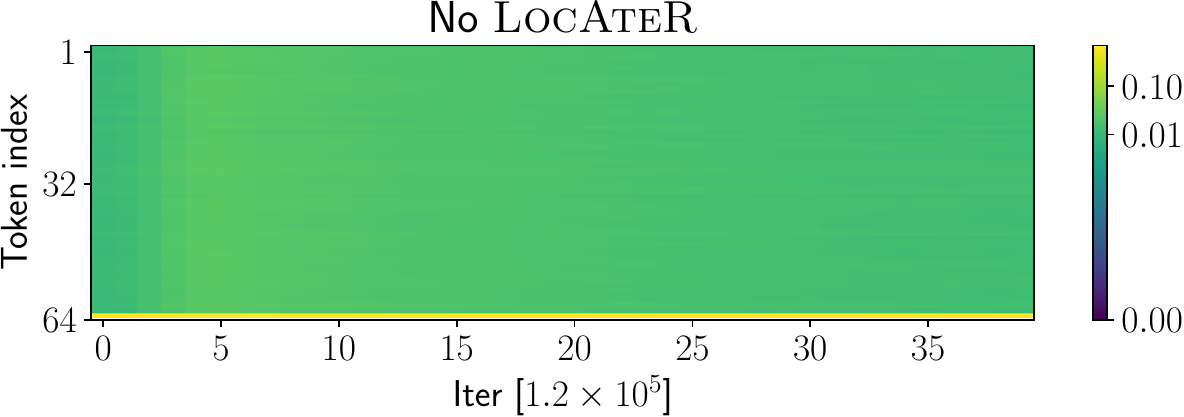}
    \caption{
        The signal propagation probabilities are shown at each iteration over $50000$ iterations.
        \textbf{(Top)} \textsc{LocAteR} with $\kappa_1=100$ and $\kappa_2=1$. A couple of light and dark horizontal stripes correspond to the attention localization.
        \textbf{(Bottom)} No \textsc{LocAteR}. Overall, the signal propagation probability is uniform at each time.
    }
    \label{figure:spp_evolution}
\end{figure}

\section{Intervening attention localization}
\label{section:proposed}

To empirically see the impact of localization on the model performance, we propose a method to control the degree of attention localization.
We focus on the eigenspectrum of $\WQK$ instead of $\Wbf = \WQK\Sigmabf$ because $\Sigmabf$ does not change during training, and the numerical simulation showed that $\WQK$ behaves as a reasonable proxy to $\Wbf$ (see \cref{section:related}).

We minimize the loss function $J$ while minimizing the eigenspectrum scale and maintaining the mean to a fixed level:
\[
    \min_{\Thetabf} \Big\{ J(\Thetabf) + \kappa_1\tr(\WQK^\top\WQK) + \kappa_2(\tr(\WQK) - 1)^2 \Big\},
\]
where $\kappa_1, \kappa_2 > 0$ are the regularization strengths.
Here, we allow $\WQK$ to be asymmetric, and the eigenspectrum scale is represented by $\tr(\WQK^\top\WQK)$.
The regularization terms can be optimized fairly easily thanks to the following derivative formulae (for $\WQK = \WQ^\top\WK$):
\[
    \begin{aligned}
        &\nabla_{\WQ}\tr(\WQK^\top\WQK) = 2\WK\WK^\top\WQ, \\
        &\nabla_{\WK}\tr(\WQK^\top\WQK) = 2\WQ\WQ^\top\WK, \\
        &\nabla_{\WQ}[(\tr(\WQK)-1)^2] \!=\! [\tr(\WQK)-1]\tr(\WQK)\WK, \\
        &\nabla_{\WK}[(\tr(\WQK)-1)^2] \!=\! [\tr(\WQK)-1]\tr(\WQK)\WQ.
    \end{aligned}
\]
Since this whole objective drives the eigenspectrum scale to a small value, the signal vanishing can be avoided automatically.
We call this regularization scheme \textsc{LocAteR} (LOCalized ATtEntion Regularization).

\section{Experiments}
\label{section:experiments}

We aim to observe the correlation between the eigenspectrum and localization.
To this end, we train transformers with \textsc{LocAteR} and varying $\kappa_1,\kappa_2$, and see how the model performances and attention foci change over time.

\paragraph{Setup.}
We used fairseq \verb|v0.12.2|~\citep{Ott2019NACCL}, which is a toolkit oriented for sequence modeling, to implement and train transformers.
The basic training scheme was inherited from \href{https://github.com/facebookresearch/fairseq/blob/main/fairseq_cli/train.py}{\texttt{fairseq-cli/train.py}}.
The model is a $1$-layer transformer with a single-head self-attention and Post-LN (default), and the input embedding dimension, attention embedding dimension, and feed-forward net embedding dimension are set to $128$ altogether (namely, $d=128$).%
\footnote{
    The experimental results with deeper transformers are shown in \cref{section:additional_experiments}.
    The overall trends remain alike.
}
Input data were transformed into $64$ tokens (namely, $T=64$) with batch size $64$.
The optimizer is Adam~\citep{Kingma2015ICLR} with default parameters and no clip norm, and the weight decay with $0.01$ is used.
The learning rate is fixed to $2.5\times10^{-5}$ without any scheduling.
The FP16 quantizer was applied to reduce memory usage.
All the other configs remain to be the same as the default in \texttt{fairseq-cli/train.py}.
Under this config, we updated the model with $50000$ iters.

\paragraph{Language modeling.}
We conducted the language modeling task.
The dataset we used is WikiText-2~\citep{Merity2016}, which is a collection of high-quality Wikipedia articles.
We conduced the experiments with fixed scale regularization strength $\kappa_1=100$ and varying mean regularization strength $\kappa_2$ from $0, 10^{-3}, 10^{-2}, \dots, 10^{0}$.

The results are shown in \cref{figure:result_lm_d128_l1}, in which stronger regularizers tend to make the eigenspectrum scale smaller.
This, in turn, maintains the attention entropy higher during the updates entirely, and eventually, the model achieves better perplexity.
While the better model performance with higher attention entropy has already been observed by \citet{Zhai2023ICML}, we also showed that a smaller eigenspectrum scale contributes to higher attention entropy.
This empirically corroborates that attention localization leads to better model performance, probably because the attention mechanism appropriately selects relevant tokens during training.

\Cref{figure:spp_evolution} shows the signal propagation probability at each training iteration.
We compute the signal propagation probability of token $i$ by counting the frequency of $\inpr{\gammabf^i}{\omegabf} + \gamma^i_0 \in [0,1]$ in a given batch.
\textsc{LocAteR} entails salient horizontal stripes, each corresponding to attended tokens.
Yet, the stripes do not appear in ``bulk'' as we analyzed in \cref{section:analysis} because our synthetic data model in \cref{assumption:random_walk} does not perfectly align with real datasets.
Nevertheless, our experiments evidently contrast the localized and uniformed attention depending on the eigenspectrum scale because no salient stripes are observed without \textsc{LocAteR}.

In \cref{figure:result_lm_d128_l1,figure:spp_evolution}, we observe different learning phases for the first $10^4$ and the rest iters.
Indeed, \citet{Tian2023} observed similar phenomena and explained that it is due to the different convergence speeds between attention weights corresponding to informative and non-informative tokens.
The relationship between the $\WQK$-eigenspectrum and this dynamics is beyond our scope and left for future work.

\section{Conclusion and limitation}
\label{section:conclusion}

We revealed that attention localizes when the eigenspectrum of $\Wbf$ concentrates to a non-zero mean, or equivalently, with larger eigenspectrum mean $\tr(\Wbf)$ and smaller scale $\tr(\Wbf^2)$.
Based on it, \textsc{LocAteR} was proposed to shrink the scale $\tr(\WQK^2)$ while maintaining the mean $\tr(\WQK)$.
Interestingly, maximizing the scale is related to mitigating both rank collapse and entropy collapse, and hence, the two apparently contradictory failure modes can be reconciled.
The experiments on a real-world dataset corroborate it, though the random walk model is not perfectly satisfied.

We recognize three limitations of this work.
First, we rely on the strong random walk model.
Although the Gaussianity may be reasonable because of usual initialization schemes of transformer embedding layers, it is interesting to consider an alternative model to capture token correlations better.
Second, the formal analysis is mainly restricted to $1$-layer transformers.
Recent studies often consider gradient explosion in the large-depth limit from the viewpoint of layer normalization~\citep{Xiong2020ICML,Takase2023ACL} and initialization~\citep{Bachlechner2021UAI,Takase2023}.
It must be fruitful to integrate these perspectives to our gradient analysis through \cref{equation:qk_grad}.
Lastly, why attention localization leads to better model performance still remains elusive.
Whereas localization is related to avoiding rank collapse (and hence higher model expressivity), we need additional effort to fully understand the mechanism.

\section*{Acknowledgments}
A part of the experiments of this research was conducted using Wisteria/Aquarius in the Information Technology Center, the University of Tokyo.

\bibliography{reference}
\bibliographystyle{icml2024}

\newpage
\appendix
\onecolumn

\setlength{\abovedisplayskip}{11pt}
\setlength{\belowdisplayskip}{11pt}

\section{Helper lemmas}
\label{section:helper}

\begin{lemma}
    \label{lemma:moments}
    Let $\Wbf \in \Rbb^{d \times d}$ be a symmetric matrix.
    Fix $\abf, \mubf \in \Rbb^d$ and $\Sigmabf \in \Rbb^{d \times d}$ be a covariance matrix.
    For $\xbf \sim \Ncal(\mbf, \Sigmabf)$, the following moment formulae hold:
    \begin{align}
        \label{equation:moment_quad}
        \E[\xbf^\top\Wbf\xbf] &= \tr(\Wbf\Sigmabf) + \mbf^\top\Wbf\mbf \\
        \label{equation:moment_outer}
        \E[\xbf\xbf^\top] &= \Sigmabf + \mbf\mbf^\top \\
        \label{equation:moment_cubic}
        \E[\abf^\top\Wbf\xbf\xbf^\top\Wbf\xbf] &= 2\abf^\top\Wbf\Sigmabf\Wbf\mbf + \abf^\top\Wbf\mbf\{\tr(\Wbf\Sigmabf) + \mbf^\top\Wbf\mbf\} \\
        \label{equation:moment_quad_quad}
        \E[\xbf^\top\Wbf\xbf\xbf^\top\Wbf\xbf] &= 2\tr(\Wbf\Sigmabf\Wbf\Sigmabf) + \tr(\Wbf\Sigmabf)^2 + 4\mbf^\top\Wbf\Sigmabf\Wbf\mbf + 2\tr(\Wbf\Sigmabf)\mbf^\top\Wbf\mbf + \mbf^\top\Wbf\mbf\mbf^\top\Wbf\mbf
    \end{align}
\end{lemma}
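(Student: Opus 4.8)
The plan is to reduce every formula to moments of a centered Gaussian via the shift $\xbf = \mbf + \zbf$ with $\zbf \sim \Ncal(\zerobf, \Sigmabf)$, and then to invoke only three facts about $\zbf$: $\E[\zbf] = \zerobf$, $\E[\zbf\zbf^\top] = \Sigmabf$, and the vanishing of all odd-degree moments (the centered Gaussian density is even, so any product of an odd number of coordinates integrates to zero). The quartic formula additionally needs Isserlis'/Wick's theorem, $\E[z_i z_j z_k z_l] = \Sigma_{ij}\Sigma_{kl} + \Sigma_{ik}\Sigma_{jl} + \Sigma_{il}\Sigma_{jk}$, which I would record as a standalone helper fact before starting.

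For \eqref{equation:moment_quad} and \eqref{equation:moment_outer}, I would expand $(\mbf+\zbf)^\top\Wbf(\mbf+\zbf)$ and $(\mbf+\zbf)(\mbf+\zbf)^\top$ and discard the terms linear in $\zbf$, leaving $\mbf^\top\Wbf\mbf + \E[\zbf^\top\Wbf\zbf]$ and $\mbf\mbf^\top + \E[\zbf\zbf^\top]$. The residual expectations are $\E[\zbf^\top\Wbf\zbf] = \tr(\Wbf\,\E[\zbf\zbf^\top]) = \tr(\Wbf\Sigmabf)$ and $\E[\zbf\zbf^\top] = \Sigmabf$. These serve as the base cases used by the two higher-order identities.

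For the cubic \eqref{equation:moment_cubic}, I would write the integrand as the product of the linear form $\abf^\top\Wbf\xbf = \abf^\top\Wbf\mbf + \abf^\top\Wbf\zbf$ and the quadratic form $\xbf^\top\Wbf\xbf = \mbf^\top\Wbf\mbf + 2\mbf^\top\Wbf\zbf + \zbf^\top\Wbf\zbf$, expand, and keep only the terms of even total degree in $\zbf$. The degree-zero term gives $\abf^\top\Wbf\mbf\,(\mbf^\top\Wbf\mbf)$, and the two degree-two survivors are $\abf^\top\Wbf\mbf\,\E[\zbf^\top\Wbf\zbf] = \abf^\top\Wbf\mbf\,\tr(\Wbf\Sigmabf)$ and $2\,\E[(\abf^\top\Wbf\zbf)(\mbf^\top\Wbf\zbf)] = 2(\Wbf\abf)^\top\Sigmabf(\Wbf\mbf) = 2\abf^\top\Wbf\Sigmabf\Wbf\mbf$, where I use the symmetry of $\Wbf$ to pull the matrices outside the inner product. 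Summing these reproduces the claim.

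The quartic \eqref{equation:moment_quad_quad} is where the real work lies. Writing $\xbf^\top\Wbf\xbf = a + b + c$ with $a = \mbf^\top\Wbf\mbf$, $b = 2\mbf^\top\Wbf\zbf$, $c = \zbf^\top\Wbf\zbf$, I would square and take expectations; the odd cross terms $2ab$ and $2bc$ vanish, leaving $a^2 + \E[b^2] + \E[c^2] + 2a\,\E[c]$. Here $a^2 = \mbf^\top\Wbf\mbf\mbf^\top\Wbf\mbf$, $\E[b^2] = 4\mbf^\top\Wbf\Sigmabf\Wbf\mbf$, and $2a\,\E[c] = 2\tr(\Wbf\Sigmabf)\,\mbf^\top\Wbf\mbf$ follow from the base cases. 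The genuinely new computation is $\E[c^2] = \E[(\zbf^\top\Wbf\zbf)^2] = \sum_{i,j,k,l} W_{ij}W_{kl}\,\E[z_i z_j z_k z_l]$; substituting the three Isserlis pairings, the ``diagonal'' pairing contributes $\tr(\Wbf\Sigmabf)^2$ and the two ``crossed'' pairings each collapse to $\tr(\Wbf\Sigmabf\Wbf\Sigmabf)$, giving $\tr(\Wbf\Sigmabf)^2 + 2\tr(\Wbf\Sigmabf\Wbf\Sigmabf)$. I expect the index bookkeeping that folds the crossed pairings into a single matrix product under the trace—repeatedly using the symmetry of both $\Wbf$ and $\Sigmabf$ to align indices—to be the only delicate step; everything else is bilinear algebra. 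Collecting all surviving terms yields \eqref{equation:moment_quad_quad}.
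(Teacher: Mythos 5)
Your derivation is correct in every step: the shift $\xbf = \mbf + \zbf$, the elimination of odd-degree terms by symmetry of the centered Gaussian, the two base cases $\E[\zbf^\top\Wbf\zbf] = \tr(\Wbf\Sigmabf)$ and $\E[\zbf\zbf^\top] = \Sigmabf$, and the Isserlis/Wick computation $\E[(\zbf^\top\Wbf\zbf)^2] = \tr(\Wbf\Sigmabf)^2 + 2\tr(\Wbf\Sigmabf\Wbf\Sigmabf)$ all check out, and the surviving terms assemble into \eqref{equation:moment_quad}--\eqref{equation:moment_quad_quad} exactly. The comparison with the paper is somewhat one-sided, however: the paper does not prove this lemma at all, but simply remarks that the formulae are standard and cites a reference (Brookes's matrix reference manual). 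Your proposal therefore supplies a self-contained argument where the paper defers to the literature; the centering-plus-Wick route you chose is the canonical way to establish these identities, and the only part requiring any care---the index bookkeeping that folds the two crossed Isserlis pairings into $2\tr(\Wbf\Sigmabf\Wbf\Sigmabf)$ using the symmetry of $\Wbf$ and $\Sigmabf$---is exactly where you flagged the delicacy. One cosmetic note: the lemma statement writes $\mubf$ in the hypotheses but $\mbf$ in the formulae; your proof correctly treats these as the same mean vector.
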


The formulae in \cref{lemma:moments} are standard and cropped from \citet{Brookes1998}.

\begin{lemma}
    \label{lemma:rec_moments}
    Let $\Wbf \in \Rbb^{d \times d}$ be a symmetric matrix.
    For $i \le j$, suppose that $\xbf_i, \xbf_j$ follow \cref{assumption:random_walk}.
    Then, the following formulae hold:
    \begin{align}
        \label{equation:rec_moment_quad}
        \E[\xbf_i^\top\Wbf\xbf_i] &= (i-1)\tr(\Wbf\Sigmabf) \\
        \label{equation:rec_moment_quad_quad:1}
        \E[\xbf_i^\top\Wbf\xbf_i\xbf_i^\top\Wbf\xbf_i] &= (i^2-2i+2)\{2\tr(\Wbf\Sigmabf\Wbf\Sigmabf) + \tr(\Wbf\Sigmabf)^2\} \\
        \label{equation:rec_moment_quad_quad:2}
        \E[\xbf_i^\top\Wbf\xbf_j\xbf_j^\top\Wbf\xbf_i] &= (i^2+ij-3i-j+4)\tr(\Wbf\Sigmabf\Wbf\Sigmabf) + (i^2-2i+2)\tr(\Wbf\Sigmabf)^2 \\
        \label{equation:rec_moment_cubic}
        \E[\xbf_i^\top\Wbf\xbf_j\xbf_j^\top\Wbf\xbf_j] &= (ij-i-j+2)\{2\tr(\Wbf\Sigmabf\Wbf\Sigmabf) + \tr(\Wbf\Sigmabf)^2\}
    \end{align}
\end{lemma}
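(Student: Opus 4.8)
The plan is to exploit the independent-increment (Markov) structure of the Gaussian random walk so that each formula reduces to a single application of the base identities in \cref{lemma:moments}. First I would unfold \cref{assumption:random_walk}: each $\xbf_t$ is a sum of independent $\Ncal(\zerobf,\Sigmabf)$ increments, so $(\xbf_i,\xbf_j)$ is jointly Gaussian with mean $\zerobf$, the covariance of $\xbf_i$ is a multiple of $\Sigmabf$ counting the increments accumulated by step $i$, and---crucially for $i\le j$---the block increment $\deltabf \defeq \xbf_j-\xbf_i$ is independent of $\xbf_i$ with law $\Ncal(\zerobf,(j-i)\Sigmabf)$. Equivalently, $\xbf_j\mid\xbf_i \sim \Ncal(\xbf_i,(j-i)\Sigmabf)$, and the cross-covariance of $\xbf_i$ and $\xbf_j$ coincides with the covariance of $\xbf_i$. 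These three facts are the only inputs the computations require.

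For the two ``$i$-only'' moments \eqref{equation:rec_moment_quad} and \eqref{equation:rec_moment_quad_quad:1}, I would substitute the covariance of $\xbf_i$ directly into the single-vector formulae \eqref{equation:moment_quad} and \eqref{equation:moment_quad_quad} with $\mbf=\zerobf$. The quadratic form then collapses to a multiple of $\tr(\Wbf\Sigmabf)$ and the quartic form to a multiple of $2\tr(\Wbf\Sigmabf\Wbf\Sigmabf)+\tr(\Wbf\Sigmabf)^2$; reading off the scalar prefactor (the accumulated-increment count, squared in the quartic case) produces the stated polynomials in $i$.

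For the two cross moments \eqref{equation:rec_moment_quad_quad:2} and \eqref{equation:rec_moment_cubic} I would condition on $\xbf_i$ and use the tower property $\E[\,\cdot\,]=\E\big[\E[\,\cdot\mid\xbf_i]\big]$, with $\xbf_j\mid\xbf_i\sim\Ncal(\xbf_i,(j-i)\Sigmabf)$ carrying the \emph{nonzero} conditional mean $\mbf=\xbf_i$. For \eqref{equation:rec_moment_quad_quad:2} the inner expectation is just the second moment \eqref{equation:moment_outer}, giving $\xbf_i^\top\Wbf[(j-i)\Sigmabf+\xbf_i\xbf_i^\top]\Wbf\xbf_i$; for \eqref{equation:rec_moment_cubic} it is the cubic \eqref{equation:moment_cubic} evaluated at mean $\xbf_i$. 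Taking the outer expectation over $\xbf_i$ then reduces every surviving term to quantities already in hand: $\E[\xbf_i^\top\Wbf\Sigmabf\Wbf\xbf_i]$ via \eqref{equation:moment_quad} applied to the matrix $\Wbf\Sigmabf\Wbf$, $\E[\xbf_i^\top\Wbf\xbf_i]$ via \eqref{equation:rec_moment_quad}, and $\E[(\xbf_i^\top\Wbf\xbf_i)^2]$ via \eqref{equation:rec_moment_quad_quad:1}. Simplifying with trace cyclicity and the symmetry of $\Wbf$ and $\Sigmabf$ collapses all the trace terms to the two invariants $\tr(\Wbf\Sigmabf)^2$ and $\tr(\Wbf\Sigmabf\Wbf\Sigmabf)$.

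The main obstacle I anticipate is arithmetic bookkeeping rather than any conceptual step: the conditional cubic/quartic expansions spawn several cross terms whose trace structures must be matched, and because each covariance prefactor counts shared increments, one must track these counts carefully---through the products $(j-i)$ and the squared count entering \eqref{equation:rec_moment_quad_quad:1}---to land on the exact bivariate polynomials advertised in \eqref{equation:rec_moment_quad_quad:2} and \eqref{equation:rec_moment_cubic}. Keeping the conditional-mean contributions (which feed the $\xbf_i^\top\Wbf\xbf_i$ moments) cleanly separated from the conditional-covariance contributions (which feed the $(j-i)$-weighted trace terms) is what makes this tracking manageable.
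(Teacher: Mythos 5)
Your overall strategy is the one the paper intends---the paper itself only says the formulae ``can be shown by recursively applying \cref{lemma:moments}'' and omits all details---and your treatment of the two cross moments is sound. Conditioning on $\xbf_i$ with $\xbf_j\mid\xbf_i\sim\Ncal(\xbf_i,(j-i)\Sigmabf)$, applying \eqref{equation:moment_outer} and \eqref{equation:moment_cubic} with conditional mean $\mbf=\xbf_i$, and reducing the outer expectation to \eqref{equation:rec_moment_quad} (applied to the symmetric matrix $\Wbf\Sigmabf\Wbf$) and to \eqref{equation:rec_moment_quad_quad:1} reproduces \eqref{equation:rec_moment_quad_quad:2} and \eqref{equation:rec_moment_cubic} exactly; for instance $(j-i)(i-1)+2(i^2-2i+2)=i^2+ij-3i-j+4$.

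The step that fails is your shortcut for the two single-index moments. Substituting the marginal covariance of $\xbf_i$ and ``reading off the scalar prefactor, squared in the quartic case'' necessarily yields a perfect square: with $\xbf_i\sim\Ncal(\zerobf,c\,\Sigmabf)$ you get $c\tr(\Wbf\Sigmabf)$ and $c^2\{2\tr(\Wbf\Sigmabf\Wbf\Sigmabf)+\tr(\Wbf\Sigmabf)^2\}$. Under the literal \cref{assumption:random_walk} the increment count is $c=i$; under the convention $\xbf_1=\zerobf$ it is $c=i-1$. Neither choice produces the advertised coefficient $i^2-2i+2=(i-1)^2+1$, which is not a perfect square for $i\ge2$, so your method cannot land on \eqref{equation:rec_moment_quad_quad:1} as stated. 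The paper's pair of constants is obtained only by running the one-step recursion for $\E[(\xbf_i^\top\Wbf\xbf_i)^2]$ using the $(k-1)$-count of \eqref{equation:rec_moment_quad} for the intermediate quadratic moments together with the base case $\E[(\xbf_1^\top\Wbf\xbf_1)^2]=2\tr(\Wbf\Sigmabf\Wbf\Sigmabf)+\tr(\Wbf\Sigmabf)^2$ from $\xbf_1\sim\Ncal(\zerobf,\Sigmabf)$; note that \eqref{equation:rec_moment_quad} itself then conflicts with that base case at $i=1$, so part of the mismatch is an internal inconsistency of the lemma rather than of your plan. To make your proof close, either fix a convention for $\xbf_1$ and derive the consistent pair $\bigl(c\tr(\Wbf\Sigmabf),\,c^2\{2\tr(\Wbf\Sigmabf\Wbf\Sigmabf)+\tr(\Wbf\Sigmabf)^2\}\bigr)$, or carry out the step-by-step recursion that generates the stated constants; in either case the discrepancy is of lower order in $i$ and does not affect the $o(1)$ asymptotics used in \cref{lemma:spp_mean_var}, but as a proof of the lemma as literally written your plan does not reproduce \eqref{equation:rec_moment_quad} or \eqref{equation:rec_moment_quad_quad:1}.
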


The formulae in \cref{lemma:rec_moments} can be shown by recursively applying \cref{lemma:moments}.
We omit the proofs since they are elementary.

\section{Omitted derivations}
\label{section:derivation}

\subsection{QK-gradient}
Here, we complement the derivations of the QK-gradient terms shown in \cref{section:direction}.
To get \cref{equation:qk_grad}, we compute $\nabla_{\WQK}J$:
\[
\begin{aligned}
    \nabla_{\WQK}J
    &= \frac{1}{2}\E[\nabla_{\WQK}\|\ybf - \Fbf(\Xbf)_T\|^2] \\
    &= \frac{1}{2}\E[\nabla_{\WQK}\{\Fbf(\Xbf)_T^\top\Fbf(\Xbf)_T - 2\ybf^\top\Fbf(\Xbf)_T\}] \\
    &= \frac{1}{2}\E[\omegabf^\top\Gammabf\Xbf^\top\WV^\top\WF^\top\WF\WV\Xbf\Gammabf^\top\omegabf + 2(\WV\Xbf\tilde\gammabf_0 + \xbf_T)^\top\WF^\top\WF\WV\Xbf\Gammabf^\top\omegabf - 2\ybf^\top\WF\WV\Xbf\Gammabf^\top\omegabf] \\
    &= \frac{1}{2\lambda^2}\E[\xbf_T^\top\WQK^\top\Xbf\Gammabf\Pbf\Gammabf^\top\Xbf^\top\WQK\xbf_T] + \frac{1}{\lambda}\E[(\tilde\gammabf_0)^\top\Pbf\Gammabf^\top\Xbf^\top\WQK\xbf_T] + \frac{1}{\lambda}\E[\qbf^\top\Gammabf^\top\Xbf^\top\WQK\xbf_T] \\
    &= \frac{1}{\lambda^2}\E[\Xbf\Gammabf\Pbf\Gammabf^\top\Xbf^\top\WQK\xbf_T\xbf_T^\top] + \frac{1}{\lambda}\E[\Xbf\Gammabf\Pbf\tilde\gammabf_0\xbf_T^\top] + \frac{1}{\lambda}\E[\Xbf\Gammabf\qbf\xbf_T^\top].
\end{aligned}
\]
By expanding the first term of $\nabla_{\WQK}J$, we get the following:
\begin{equation}
    \label{equation:qk_grad_1_}
    \frac{1}{\lambda^2} \!\! \sum_{i,j,\alpha,\beta \in [T]} \!\! \E\left[ \tilde\gamma^i_\alpha\tilde\gamma^j_\beta (\xbf_i^\top\check\Pbf\xbf_j)(\xbf_\beta^\top\WQK\xbf_T)\xbf_\alpha\xbf_T^\top \right],
\end{equation}
where $\check\Pbf \defeq \WV^\top\WF^\top\WF\WV$.
Similarly, by expanding the second and third terms of $\nabla_{\WQK}J$, we get the following terms, respectively:
\begin{align}
    \label{equation:qk_grad_2}
    &\frac{1}{\lambda} \sum_{i,\alpha,\beta \in [T]} \E\left[\tilde\gamma^i_\alpha\tilde\gamma_0^\beta (\xbf_i^\top\check\Pbf\xbf_\beta)\xbf_\alpha\xbf_T^\top\right], \\
    \label{equation:qk_grad_3}
    &\frac{1}{\lambda} \sum_{i,\alpha \in [T]} \E\left[\tilde\gamma^i_\alpha \{\xbf_i^\top\WV^\top\WF^\top(\WF\xbf_T - \ybf)\}\xbf_\alpha\xbf_T^\top\right].
\end{align}

To get \cref{equation:qk_grad_1_}, we expand the first term of $\nabla_{\WQK}J$:
\[
\begin{aligned}
  \E[\Xbf\Gammabf\Pbf\Gammabf^\top\Xbf^\top\WQK\xbf_T\xbf_T^\top]  
  &= \E[\Xbf\Gammabf\Xbf^\top\WV^\top\WF^\top\WF\WV\Xbf\Gammabf^\top\Xbf^\top\WQK\xbf_T\xbf_T^\top] \\
  &= \E\left[\{(\WF\WV\Xbf)(\Xbf\Gammabf)^\top\}^\top\{(\WF\WV\Xbf)(\Xbf\Gammabf)^\top\}\WQK\xbf_T\xbf_T^\top\right] \\
  &= \E\bigg[\bigg\{\sum_{i \in [T]}(\Xbf\tilde\gammabf^i)(\WF\WV\xbf_i)^\top\bigg\} \bigg\{\sum_{j \in [T]}(\WF\WV\xbf_j)(\Xbf\tilde\gammabf^j)^\top\bigg\}\WQK\xbf_T\xbf_T^\top\bigg] \\
  &= \E\bigg[\sum_{i,j\in[T]} (\Xbf\tilde\gammabf^i)\{(\WF\WV\xbf_i)^\top(\WF\WV\xbf_j)\}(\Xbf\tilde\gammabf^j)^\top\WQK\xbf_T\xbf_T^\top\bigg] \\
  &= \E\bigg[\sum_{\alpha,\beta\in[T]}\bigg\{\sum_{i,j\in[T]}\xbf_i^\top\check\Pbf\xbf_j\bigg\}\tilde\gamma^i_\alpha\tilde\gamma^j_\beta\xbf_\alpha\xbf_\beta^\top\WQK\xbf_T\xbf_T^\top\bigg] \\
  &= \sum_{i,j,\alpha,\beta\in[T]} \E[\tilde\gamma^i_\alpha\tilde\gamma^j_\beta(\xbf_i^\top\check\Pbf\xbf_j)(\xbf_\beta^\top\WQK\xbf_T)\xbf_\alpha\xbf_T^\top].
\end{aligned}
\]
To get \cref{equation:qk_grad_2}, we expand the second term of $\nabla_{\WQK}J$:
\[
\begin{aligned}
    \E[\Xbf\Gammabf\Pbf\tilde\gammabf_0\xbf_T^\top]
    &= \E[\Xbf\Gammabf\Xbf^\top\WV^\top\WF^\top\WF\WV\Xbf\tilde\gammabf_0\xbf_T^\top] \\
    &= \E[\{(\WF\WV\Xbf)(\Xbf\Gammabf)^\top\}^\top\{(\WF\WV\Xbf)(\tilde\gammabf_0\xbf_T^\top)\}] \\
    &= \E\bigg[\bigg\{\sum_{i\in[T]}(\Xbf\tilde\gammabf^i)(\WF\WV\xbf_i)^\top\bigg\} \bigg\{\sum_{\beta\in[T]}(\WF\WV\xbf_\beta)(\tilde\gamma_0^\beta\xbf_T^\top)\bigg\}\bigg] \\
    &= \E\bigg[\sum_{i,\beta\in[T]}(\Xbf\tilde\gammabf^i)\{(\WF\WV\xbf_i)^\top(\WF\WV\xbf_\beta)\}(\tilde\gamma_0^\beta\xbf_T^\top)\bigg] \\
    &= \E\bigg[\sum_{\alpha\in[T]}\bigg\{\sum_{i,\beta\in[T]}\xbf_i^\top\check\Pbf\xbf_\beta\bigg\}\tilde\gamma^i_\alpha\tilde\gamma_0^\beta\xbf_\alpha\xbf_T^\top\bigg] \\
    &= \sum_{i,\alpha,\beta\in[T]} \E[\tilde\gamma^i_\alpha\tilde\gamma_0^\beta(\xbf_i^\top\check\Pbf\xbf_\beta)\xbf_\alpha\xbf_T^\top].
\end{aligned}
\]
To get \cref{equation:qk_grad_3}, we expand the third term of $\nabla_{\WQK}J$:
\[
\begin{aligned}
    \E[\Xbf\Gammabf\qbf\xbf_T^\top]
    &= \E[\Xbf\Gammabf\Xbf^\top\WV^\top\WF^\top(\WF\xbf_T - \ybf)\xbf_T^\top] \\
    &= \E[\{(\WF\WV\Xbf)(\Xbf\Gammabf)^\top\}^\top(\WF\xbf_T - \ybf)\xbf_T^\top] \\
    &= \E\bigg[\bigg\{\sum_{i\in[T]}(\Xbf\tilde\gammabf^i)(\WF\WV\xbf_i)^\top\bigg\}(\WF\xbf_T - \ybf)\xbf_T^\top\bigg] \\
    &= \E\bigg[\bigg\{\sum_{i,\alpha\in[T]}\tilde\gamma^i_\alpha\xbf_\alpha\xbf_i^\top\WV^\top\WF^\top\bigg\}(\WF\xbf_T - \ybf)\xbf_T^\top\bigg] \\
    &= \sum_{i,\alpha\in[T]} \E[\tilde\gamma^i_\alpha\{\xbf_i^\top\WV^\top\WF^\top(\WF\xbf_T - \ybf)\}\xbf_\alpha\xbf_T^\top].
\end{aligned}
\]

\subsection{Order evaluation of QK-gradient}
The orders of the QK-gradient terms \eqref{equation:qk_grad_1_}, \eqref{equation:qk_grad_2}, and \eqref{equation:qk_grad_3} are evaluated.
In this subsection, we assume that the covariance matrix in \cref{assumption:random_walk} is $\Sigmabf = \Ibf$ for simplicity.
The following evaluation still applies with minor modifications for a general $\Sigmabf$.
For \cref{equation:qk_grad_1_}, the Cauchy--Schwarz inequality implies that
\[
\begin{aligned}
    \text{\cref{equation:qk_grad_1_}}
    &= \bigg|\sum_{i,j,\alpha,\beta} \E[\tilde\gamma^i_\alpha\tilde\gamma^j_\beta(\xbf_i^\top\check\Pbf\xbf_j)(\xbf_\beta^\top\WQK\xbf_T)\xbf_\alpha\xbf_T^\top]\bigg|^2 \\
    &\le \underbrace{\bigg\{\sum_{i,j,\alpha,\beta} \E[(\tilde\gamma^i_\alpha\tilde\gamma^j_\beta)^2]\bigg\}}_\text{(A)} \underbrace{\bigg\{\sum_{i,j,\alpha,\beta} \E[(\xbf_i^\top\check\Pbf\xbf_j)^2]\bigg\}}_\text{(B)} \underbrace{\bigg\{\sum_{i,j,\alpha,\beta} \E[(\xbf_\beta^\top\WQK\xbf_T)^2]\bigg\}}_\text{(C)} \underbrace{\bigg\{\sum_{i,j,\alpha,\beta} \E[(\xbf_\alpha\xbf_T^\top)^{\odot2}]\bigg\}}_\text{(D)}.
\end{aligned}
\]
For (A), we have $\tilde\gamma^i_\alpha, \tilde\gamma^j_\beta \le T^{-1}$ by definition, and hence $\text{(A)} = O(1)$.
For (B), by using \cref{equation:rec_moment_quad_quad:2},
\[
\begin{aligned}
    \text{(B)}
    = T^2\sum_{i,j} \E[\xbf_i^\top\check\Pbf\xbf_j\xbf_j^\top\check\Pbf\xbf_i]
    = T^2\sum_{i,j} \{(i^2+ij-3i-j+4)\tr(\check\Pbf^2) + (i^2-2i+2)\tr(\check\Pbf)^2\}
    = O(T^6).
\end{aligned}
\]
For (C), by following the same computation as \cref{equation:rec_moment_quad_quad:sub},
\[
\begin{aligned}
    \text{(C)}
    &= T^3\sum_\beta \E[\xbf_\beta^\top\WQK\xbf_T\xbf_T^\top\WQK\xbf_\beta] \\
    &= T^3\sum_\beta \{(\beta^2+(T-3)\beta-(T-4))\tr(\WQK^2) + (\beta^2-2\beta+2)\tr(\WQK)^2\}
    = O(T^6).
\end{aligned}
\]
For (D), its $(i,j)$-element can be evaluated as follows (no matter whether $i=j$ or not):
\[
\begin{aligned}
    \text{(D)}_{ij}
    = T^3\sum_\alpha \E[x_{\alpha,i}^2x_{T,j}^2]
    = T^3\sum_\alpha \E[x_{\alpha,i}^2\{(T-\alpha) + x_{\alpha,j}^2\}]
    = O(T^5)\sum_\alpha \E[x_{\alpha,i}^2] + T^3\sum_{\alpha} \E[x_{\alpha,i}^2x_{\alpha,j}^2]
    = O(T^6).
\end{aligned}
\]
By plugging them back, we now confirmed that $|\text{\cref{equation:qk_grad_1_}}| = O(T^8)$.

The orders of \cref{equation:qk_grad_2,equation:qk_grad_3} can be evaluated similarly and the detailed evaluations are omitted.
\[
\begin{aligned}
    |\text{\cref{equation:qk_grad_2}}|^2
    &= \bigg|\sum_{i,\alpha,\beta} \E[\tilde\gamma^i_\alpha\tilde\gamma_0^\beta(\xbf_i^\top\check\Pbf\xbf_\beta)\xbf_\alpha\xbf_T^\top]\bigg|^2 \\
    &\le \bigg\{\sum_{i,\alpha,\beta} \E[(\tilde\gamma^i_\alpha\tilde\gamma_0^\beta)^2]\bigg\} \bigg\{\sum_{i,\alpha,\beta} \E[(\xbf_i^\top\check\Pbf\xbf_\beta)^2]\bigg\} \bigg\{\sum_{i,\alpha,\beta} \E[(\xbf_\alpha\xbf_T^\top)^{\odot2}]\bigg\} \\
    &= O(T) \cdot O(T^4) \cdot O(T^5) \\
    &= O(T^{10}) \\
    \implies
    |\text{\cref{equation:qk_grad_2}}|
    &= O(T^5). \\
    |\text{\cref{equation:qk_grad_3}}|^2
    &= \bigg|\sum_{i,\alpha} \E[\tilde\gamma^i_\alpha\{\xbf_i^\top\WV^\top\WF^\top(\WF\xbf_T-\ybf)\}\xbf_\alpha\xbf_T^\top]\bigg|^2 \\
    &\le \bigg\{\sum_{i,\alpha} \E[(\tilde\gamma^i_\alpha)^2]\bigg\} \bigg\{\sum_{i,\alpha} \E[(\xbf_i^\top\WV^\top\WF^\top(\WF\xbf_T-\ybf))^2]\bigg\} \bigg\{\sum_{i,\alpha} \E[(\xbf_\alpha\xbf_T^\top)^{\odot2}]\bigg\} \\
    &= O(1) \cdot O(T^4) \cdot O(T^4) \\
    &= O(T^8) \\
    \implies
    |\text{\cref{equation:qk_grad_3}}|
    &= O(T^4).
\end{aligned}
\]

Hence, we have $|\text{\cref{equation:qk_grad_1_}}| = O(T^8)$, $|\text{\cref{equation:qk_grad_2}}| = O(T^5)$, and $|\text{\cref{equation:qk_grad_3}}| = O(T^4)$, which implies that the QK-gradient~\eqref{equation:qk_grad} is asymptotically dominated by \cref{equation:qk_grad_1_}.

\section{Proofs}
\label{section:proofs}

\sppmeanvar*

\begin{proof}
    To derive the mean, we use \cref{equation:rec_moment_quad}.
    \begin{equation*}
        \begin{aligned}
            \mu^i
            &= \frac{1}{\lambda T}\E[\xbf_i^\top\WQK\xbf_T] - \frac{1}{\lambda T^2}\sum_{j \in [T]}\E[\xbf_j^\top\WQK\xbf_T] + o(1) \\
            &= \frac{i-1}{\lambda T}\tr(\Wbf) - \frac{\sum_{j \in [T]} (j-1)}{\lambda T^2}\tr(\Wbf) + o(1) \\
            &= \left(i - \frac{T+1}{2}\right)\frac{\tr(\Wbf)}{\lambda T} + o(1).
        \end{aligned}
    \end{equation*}
    Note that $\gamma_0^i = o(1)$.

    To derive the variance, we first compute $\E[\xbf_i^\top\WQK\xbf_T\xbf_j^\top\WQK\xbf_T]$ (for $i \le j \le T$).
    \begin{equation}
        \label{equation:rec_moment_quad_quad:sub}
        \begin{aligned}
            \E[\xbf_i^\top\WQK\xbf_T\xbf_j^\top\WQK\xbf_T]
            &= \E[\xbf_i^\top\WQK(\xbf_T\xbf_T^\top)\WQK\xbf_j] \\
            &= \E[\xbf_i^\top\WQK\{(T-j)\Sigmabf + \xbf_j\xbf_j^\top\}\WQK\xbf_j] \\
            &= (T-j)\E[\xbf_i^\top\WQK\Sigmabf\WQK\xbf_j] + \E[\xbf_i^\top\WQK\xbf_j\xbf_j^\top\WQK\xbf_j] \\
            &= (T-j)(i-1)\tr(\Wbf^2) + (ij-i-j+2)\{2\tr(\Wbf^2) + \tr(\Wbf)^2\} \\
            &= (ij+(T-2)i-j-(T-4))\tr(\Wbf^2) + (ij-i-j+2)\tr(\Wbf)^2,
        \end{aligned}
    \end{equation}
    where \cref{equation:moment_quad} is used recursively at the second identity and \cref{equation:rec_moment_quad,equation:rec_moment_cubic} are used at the fourth identity.
    Then, the expectation of the squared term is expanded:
    \begin{equation*}
        \begin{aligned}
            \E&[\inpr{\gammabf^i}{\Xbf^\top\WQK\xbf_T}^2] \\
            &= \E\bigg[\bigg(\frac{1}{T}\xbf_i^\top\WQK\xbf_T - \frac{1}{T^2}\sum_{j \in [T]}\xbf_j^\top\WQK\xbf_T\bigg)^2\bigg] \\
            &= \E\bigg[\frac{1}{T^2}\xbf_i^\top\WQK\xbf_T\xbf_i^\top\WQK\xbf_T - \frac{2}{T^3}\sum_{j \in [T]}\xbf_i^\top\WQK\xbf_T\xbf_j^\top\WQK\xbf_T + \frac{1}{T^4}\sum_{j,j' \in [T]}\xbf_j^\top\WQK\xbf_T\xbf_{j'}^\top\WQK\xbf_T\bigg] \\
            &= \frac{1}{T^2}\underbrace{\E[\xbf_i^\top\WQK\xbf_T\xbf_i^\top\WQK\xbf_T]}_\text{(A)} \\
                &\phantom{=} - \frac{1}{T^3}\underbrace{2\E[\xbf_i^\top\WQK\xbf_T\xbf_i^\top\WQK\xbf_T]}_\text{(B1)} - \frac{1}{T^3}\underbrace{2\sum_{j>i}\E[\xbf_i^\top\WQK\xbf_T\xbf_j^\top\WQK\xbf_T]}_\text{(B2)} - \frac{1}{T^3}\underbrace{2\sum_{j<i}\E[\xbf_i^\top\WQK\xbf_T\xbf_j^\top\WQK\xbf_T]}_\text{(B3)} \\
                &\phantom{=} + \frac{1}{T^4}\underbrace{\sum_{j\in[T]}\E[\xbf_j^\top\WQK\xbf_T\xbf_j^\top\WQK\xbf_T]}_\text{(C1)} + \frac{1}{T^4}\underbrace{2\sum_{j<j'}\E[\xbf_j^\top\WQK\xbf_T\xbf_{j'}^\top\WQK\xbf_T]}_\text{(C2)}.
        \end{aligned}
    \end{equation*}
    Each term is computed by using \cref{equation:rec_moment_quad_quad:sub} multiple times.
    \begin{equation*}
        \begin{aligned}
            \text{(A)}
            &= (i^2+(T-3)i-(T-4))\tr(\Wbf^2) + (i^2-2i+2)\tr(\Wbf)^2 \\
            &= (i^2+Ti)\tr(\Wbf^2) + i^2\tr(\Wbf)^2 + o(T^2), \\
            \text{(B1)}
            &= o(T^3), \\
            \text{(B2)}
            &= 2\sum_{j>i} \{(ij+(T-2)i-j-(T-4))\tr(\Wbf^2) + (ij-i-j+2)\tr(\Wbf)^2\} \\
            &= (T^2i-2Ti^2-i^3)\tr(\Wbf^2) + (T^2i-i^3)\tr(\Wbf)^2 + o(T^3), \\
            \text{(B3)}
            &= 2\sum_{j<i} \{(ij+(T-2)j-i-(T-4))\tr(\Wbf^2) + (ij-i-j+2)\tr(\Wbf)^2\} \\
            &= (Ti^2+i^3)\tr(\Wbf^2) + i^3\tr(\Wbf)^2 + o(T^3), \\
            \text{(C1)}
            &= \sum_{j\in[T]} \{(j^2+(T-3)j-(T-4))\tr(\Wbf^2) + (j^2-2j+2)\tr(\Wbf)^2\} \\
            &= o(T^4), \\
            \text{(C2)}
            &= 2\sum_{j<j'} \{(jj'+(T-2)j-j'-(T-4))\tr(\Wbf^2) + (jj'-j-j'+2)\tr(\Wbf)^2\} \\
            &= 2\sum_{j<j'} jj'\{\tr(\Wbf^2) + \tr(\Wbf)^2\} + 2\sum_{j<j'}Tj\tr(\Wbf^2) + o(T^4) \\
            &= \sum_{j\in[T]} j(T-j)(T+j+1) \{\tr(\Wbf^2) + \tr(\Wbf)^2\} + 2T\sum_{j\in[T]}(T-j)j\tr(\Wbf^2) + o(T^4) \\
            &= \sum_{j\in[T]} (T^2j-j^3) \{\tr(\Wbf^2) + \tr(\Wbf)^2\} + \frac{T^4}{3}\tr(\Wbf^2) + o(T^4) \\
            &= \frac{7T^4}{12}\tr(\Wbf^2) + \frac{T^4}{4}\tr(\Wbf)^2 + o(T^4).
        \end{aligned}
    \end{equation*}
    By plugging them back,
    \begin{equation*}
        \E[\inpr{\gammabf^i}{\Xbf^\top\WQK\xbf_T}^2]
        = \left(\frac{7}{12}+\frac{2i^2}{T^2}\right)\tr(\Wbf^2) + \left(\frac{1}{4}-\frac{i}{T}+\frac{i^2}{T^2}\right)\tr(\Wbf)^2 + o(1).
    \end{equation*}
    Hence, the variance is derived:
    \begin{equation*}
        \begin{aligned}
            v^i
            &= \Vbb[\inpr{\gammabf^i}{\omegabf}] \\
            &= \frac{1}{\lambda^2}\E[\inpr{\gammabf^i}{\Xbf^\top\WQK\xbf_T}^2] - (\mu^i)^2 \\
            &= \frac{1}{\lambda^2}\left(\frac{7}{12}+\frac{2i^2}{T^2}\right)\tr(\Wbf^2) + o(1).
        \end{aligned}
    \end{equation*}
\end{proof}

\sppprop*

\begin{proof}
    To see 1:
    We first see that as $\xi \to \infty$,
    \[
        \Phi\left(\Big(\theta - \frac{1}{2}\Big)\xi; \theta\right) \to
        \begin{cases}
            \frac{1}{2} & \text{if } \theta > \frac{1}{2} \\
            0 & \text{if } \theta = \frac{1}{2} \\
            -\frac{1}{2} & \text{if } \theta < \frac{1}{2}
        \end{cases}
        .
    \]
    In addition, as $\xi \to \infty$ and $\eta \to 0$ with $\xi\eta \to r  \in [0, 2]$,
    \[
        \Phi\left(\Big(\theta - \frac{1}{2}\Big)\xi - \frac{1}{\eta}; \theta\right)
        \to \frac{1}{2}\erf\left(\frac{(\theta-\frac{1}{2})r-1}{\eta\sqrt{2(2\theta^2+\frac{7}{12})}}\right)
        \to -\frac{1}{2}.
    \]
    By combining them, $\rho(\theta) \to \indicator{\theta \ge \frac{1}{2}}$ at the limit.
    If $r > 2$,
    \[
        \Phi\left(\Big(\theta - \frac{1}{2}\Big)\xi - \frac{1}{\eta}; \theta\right)
        \to \frac{1}{2}\erf\left(\frac{(\theta-\frac{1}{2})r-1}{\eta\sqrt{2(2\theta^2+\frac{7}{12})}}\right)
        \to \begin{cases}
            -\frac{1}{2} & \text{if } \theta < \frac{1}{2}+\frac{1}{r} \\
            0 & \text{if } \theta = \frac{1}{2}+\frac{1}{r} \\
            \frac{1}{2} & \text{if } \theta > \frac{1}{2}+\frac{1}{r}
        \end{cases}
        ,
    \]
    and $\rho(\theta) \to \indicator{\frac{1}{2} \le \theta \le \frac{1}{2}+\frac{1}{r}}$ at the limit.

    We can see 2 in the same way as 1.

    To see 3:
    First, compute $\rho'(\theta)$ by using $\diff{}{z}\erf(z) = \frac{2}{\sqrt{\pi}}\exp(-z^2)$:
    \[
        \begin{aligned}
            \rho'(\theta)
            &= \frac{1}{\sqrt{\pi}}\exp\left(\!-\frac{((\theta-\frac{1}{2})\xi)^2}{2(2\theta^2+\frac{7}{12})}\right) \diff{}{\theta}\Bigg\{\frac{(\theta-\frac{1}{2})\xi}{\sqrt{2(2\theta^2+\frac{7}{12})}}\Bigg\} - \frac{1}{\sqrt{\pi}}\exp\left(\!-\frac{((\theta-\frac{1}{2})\xi-\frac{1}{\eta})^2}{2(2\theta^2+\frac{7}{12})}\right) \diff{}{\theta}\Bigg\{\frac{(\theta-\frac{1}{2})\xi-\frac{1}{\eta}}{\sqrt{2(2\theta^2+\frac{7}{12})}}\Bigg\} \\
            &= \left[\frac{1}{\sqrt{\pi}}\exp\left(\!-\frac{((\theta-\frac{1}{2})\xi)^2}{2(2\theta^2+\frac{7}{12})}\right) \frac{4\theta^2-\theta+\frac{5}{3}}{(2(2\theta^2+\frac{7}{12}))^{3/2}} - \frac{1}{\sqrt{\pi}}\exp\left(\!-\frac{((\theta-\frac{1}{2})\xi-\frac{1}{\eta})^2}{2(2\theta^2+\frac{7}{12})}\right) \frac{4\theta^2-\theta+\frac{5}{3}-\frac{1}{\eta}}{(2(2\theta^2+\frac{7}{12}))^{3/2}}\right]\xi.
        \end{aligned}
    \]
    By noting that $0 < \exp(-z^2) \le 1$,
    \[
        \begin{aligned}
            |\rho'(\theta)|
            &\le \frac{|\xi|}{\sqrt{\pi}} \left|\frac{4\theta^2-\theta+\frac{5}{3}}{(2(2\theta^2+\frac{7}{12}))^{3/2}} - \frac{4\theta^2-\theta+\frac{5}{3}-\frac{1}{\eta}}{(2(2\theta^2+\frac{7}{12}))^{3/2}}\right| \\
            &= \frac{|\xi|}{\sqrt{\pi}} \frac{1}{(2(2\theta^2+\frac{7}{12}))^{3/2}\eta} \\
            &\to 0 \text{~~as~~} |\xi| \to 0.
        \end{aligned}
    \]

    To see 4:
    For finite $\xi$,
    \[
        \lim_{\eta \to \infty} \Phi\left(\Big(\theta-\frac{1}{2}\Big)\xi - \frac{1}{\eta}\right) = \Phi\left(\Big(\theta-\frac{1}{2}\Big)\xi\right),
    \]
    which indicates that $\rho(\theta) \to 0$ at the limit $\eta \to \infty$.
\end{proof}

\section{Additional experiments}
\label{section:additional_experiments}

\begin{figure*}
    \centering
    \includegraphics[width=0.35\textwidth]{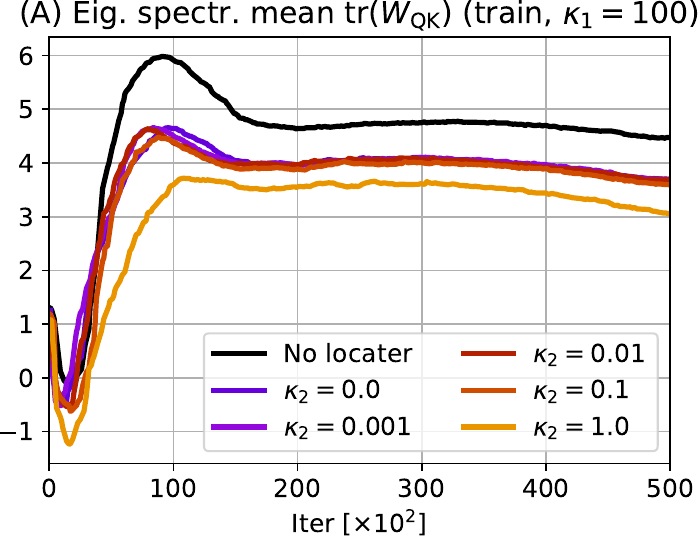}
    \includegraphics[width=0.35\textwidth]{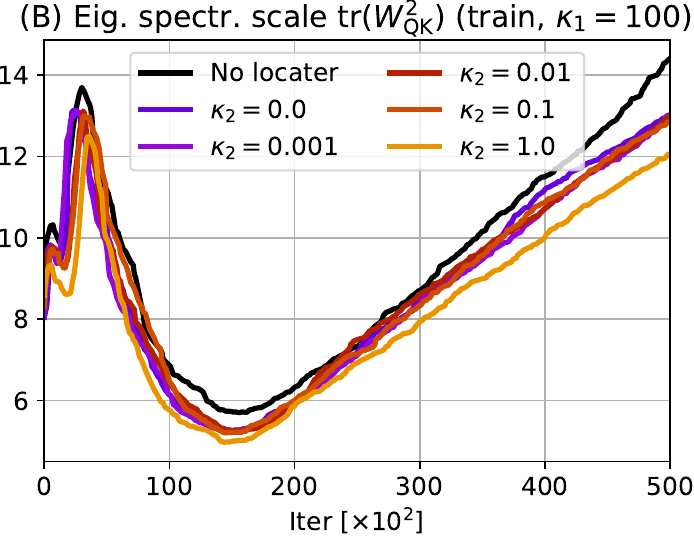} \\
    \includegraphics[width=0.35\textwidth]{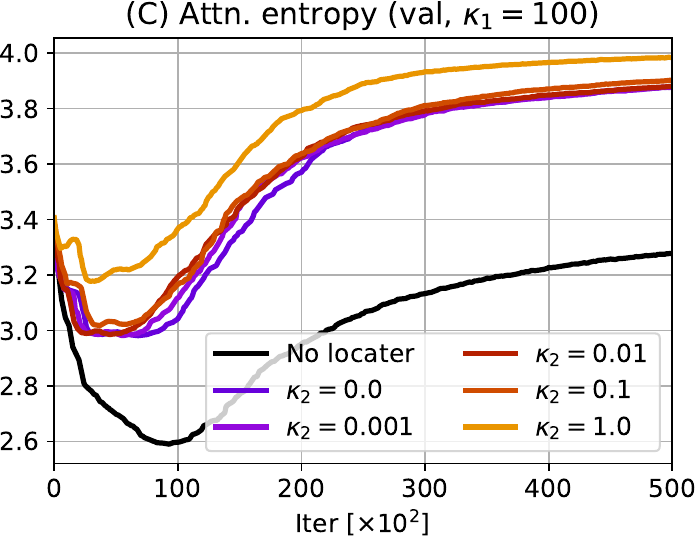}
    \includegraphics[width=0.35\textwidth]{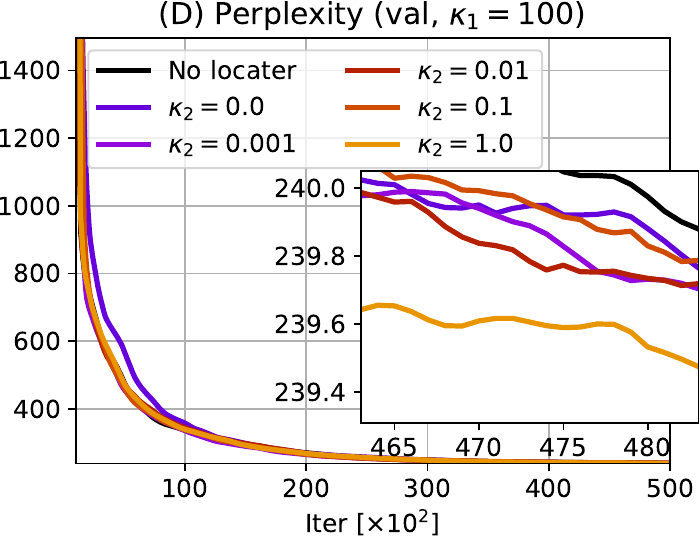}
    \caption{
        Experimental results of language modeling (WikiText-2) with $d=32$ with $1$-layers transformers, fixed $\kappa_1=100$, and varying regularization intensity $\kappa_2$.
        With stronger $\kappa_2$, the eigenspectrum scale shrinks \textbf{(B)}, the attention entropy increases \textbf{(C)}, and the perplexity improves \textbf{(D)}.
    }
    \label{figure:result_lm_d32_l1}
\end{figure*}

\begin{figure*}
    \centering
    \includegraphics[width=0.35\textwidth]{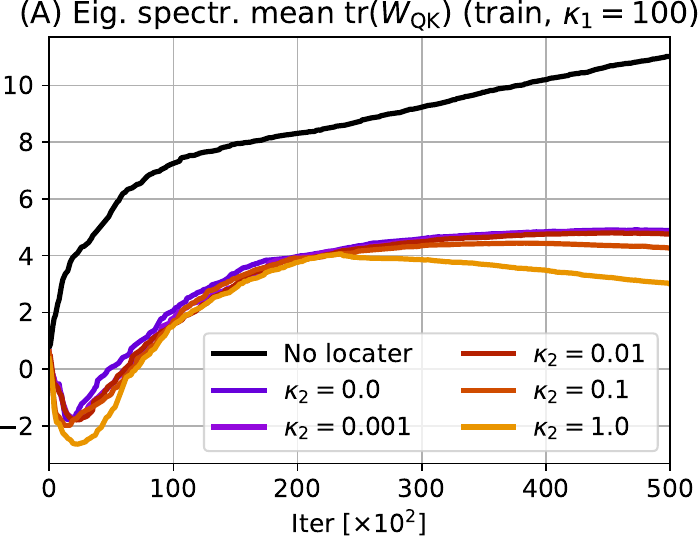}
    \includegraphics[width=0.35\textwidth]{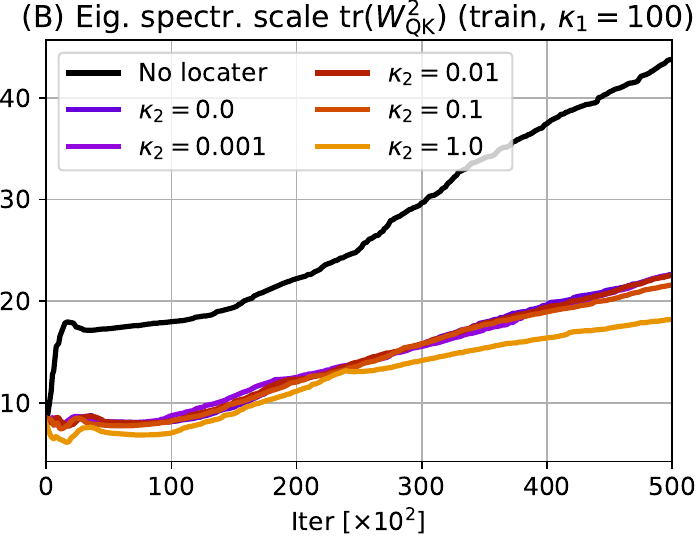} \\
    \includegraphics[width=0.35\textwidth]{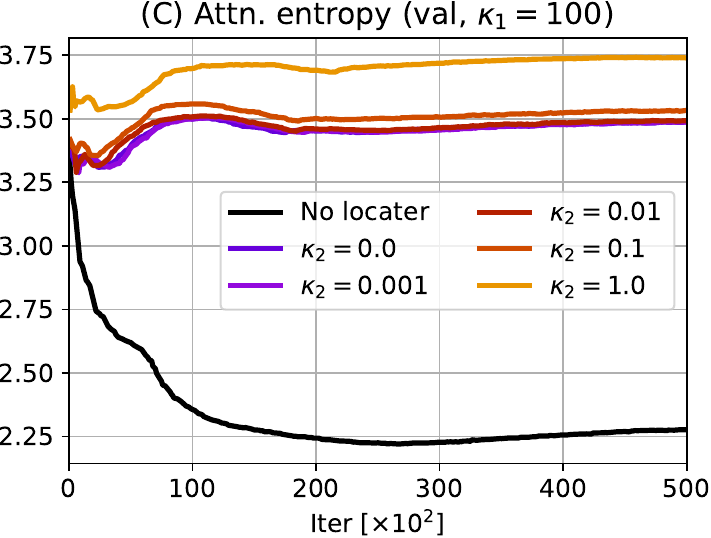}
    \includegraphics[width=0.35\textwidth]{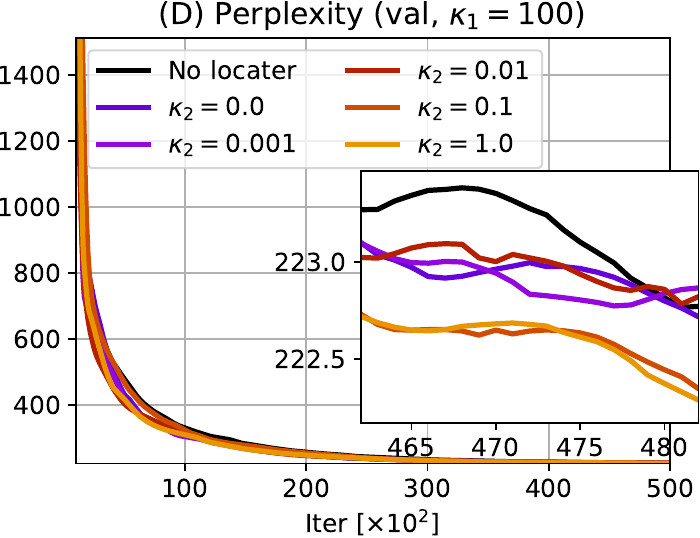}
    \caption{
        Experimental results of language modeling (WikiText-2) with $d=32$ with $3$-layers transformers, fixed $\kappa_1=100$, and varying regularization intensity $\kappa_2$.
        With stronger $\kappa_2$, the eigenspectrum scale shrinks \textbf{(B)}, the attention entropy increases \textbf{(C)}, and the perplexity improves \textbf{(D)}.
    }
    \label{figure:result_lm_d32_l3}
\end{figure*}

\begin{figure*}
    \centering
    \includegraphics[width=0.35\textwidth]{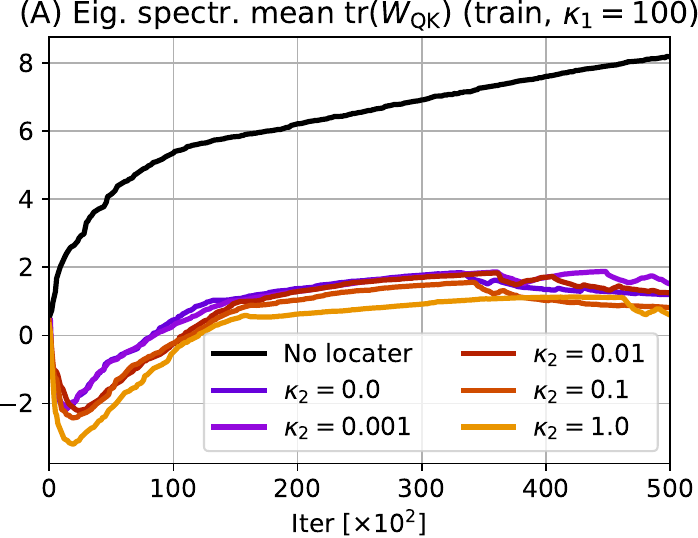}
    \includegraphics[width=0.35\textwidth]{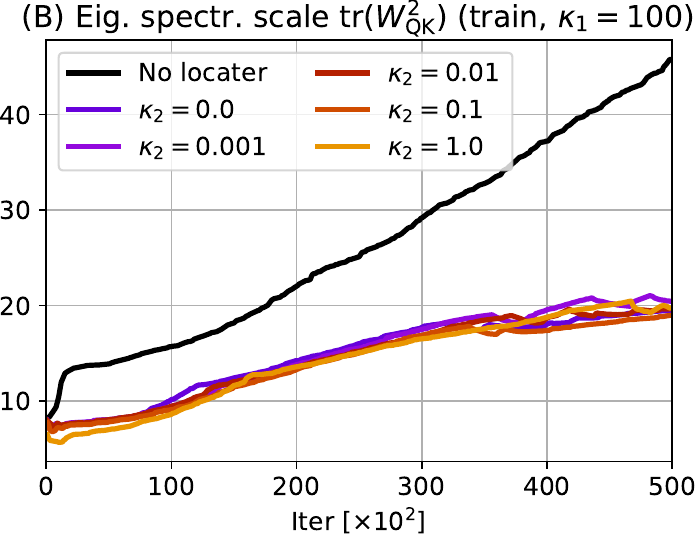} \\
    \includegraphics[width=0.35\textwidth]{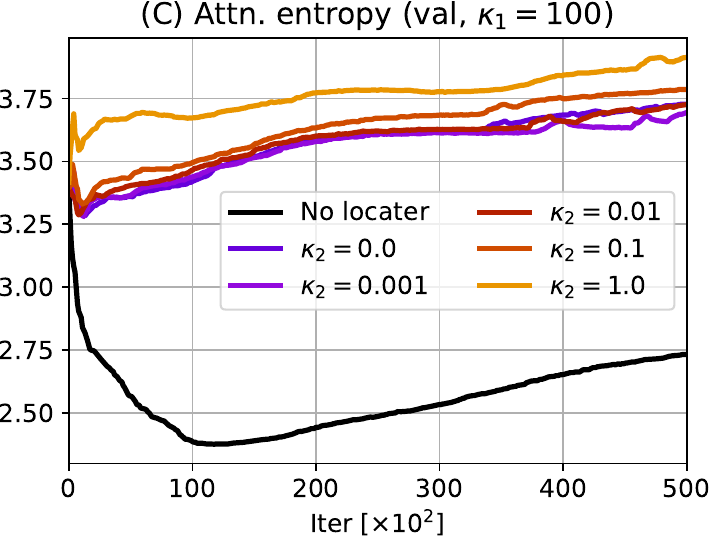}
    \includegraphics[width=0.35\textwidth]{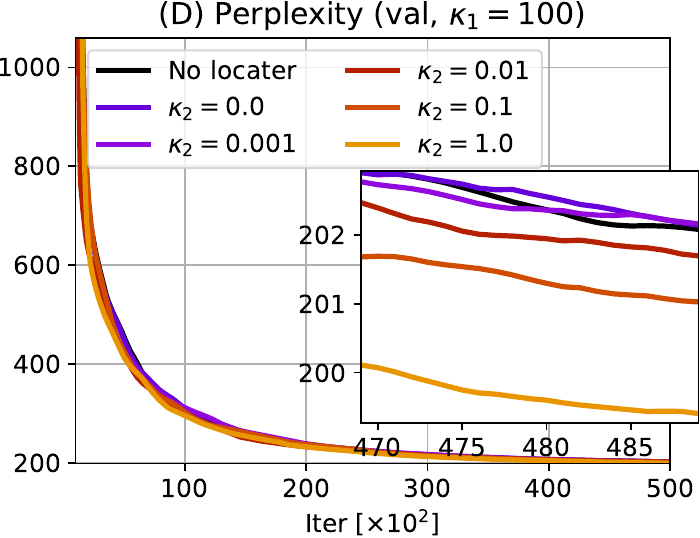}
    \caption{
        Experimental results of language modeling (WikiText-2) with $d=32$ with $6$-layers transformers, fixed $\kappa_1=100$, and varying regularization intensity $\kappa_2$.
        With stronger $\kappa_2$, the eigenspectrum scale shrinks \textbf{(B)}, the attention entropy increases \textbf{(C)}, and the perplexity improves \textbf{(D)}.
    }
    \label{figure:result_lm_d32_l6}
\end{figure*}

\begin{figure*}
    \centering
    \includegraphics[width=0.35\textwidth]{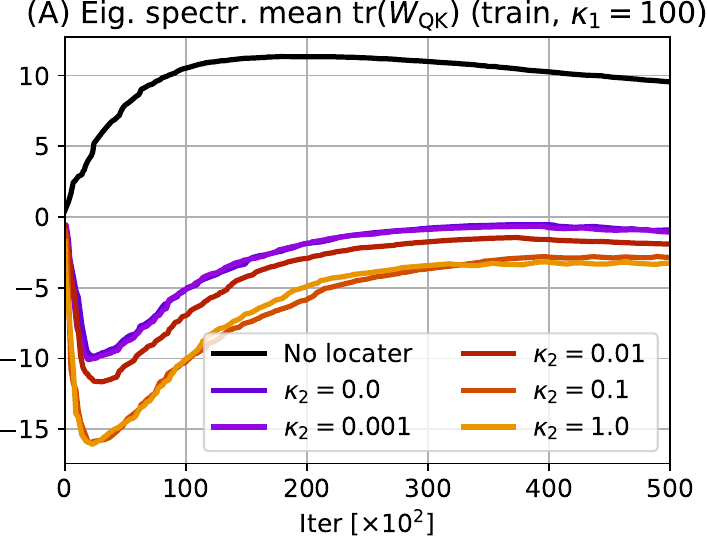}
    \includegraphics[width=0.35\textwidth]{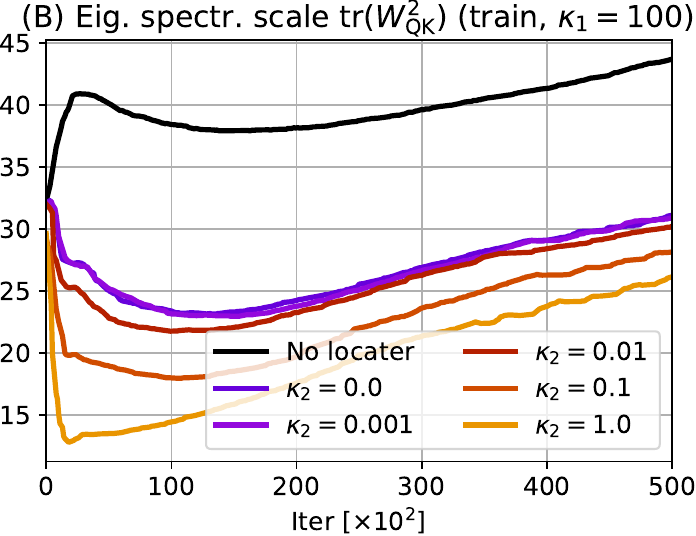} \\
    \includegraphics[width=0.35\textwidth]{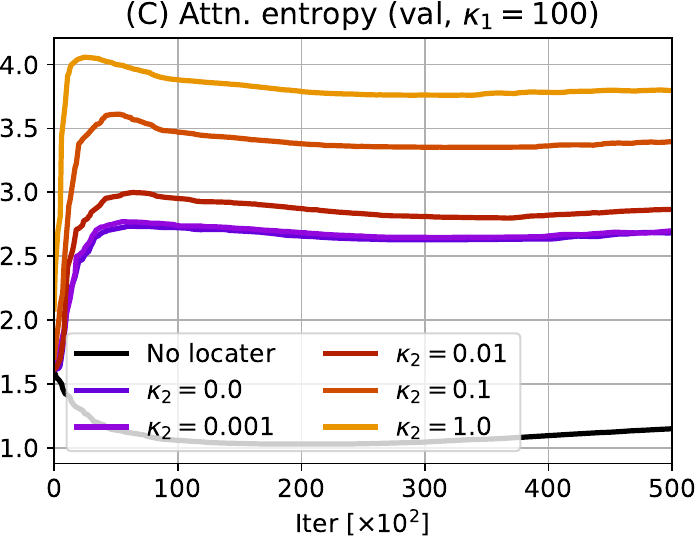}
    \includegraphics[width=0.35\textwidth]{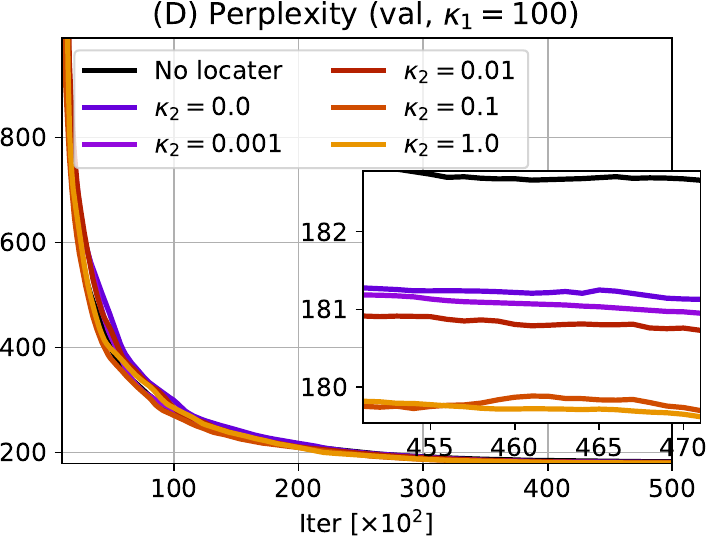}
    \caption{
        Experimental results of language modeling (WikiText-2) with $d=128$ with $3$-layers transformers, fixed $\kappa_1=100$, and varying regularization intensity $\kappa_2$.
        With stronger $\kappa_2$, the eigenspectrum scale shrinks \textbf{(B)}, the attention entropy increases \textbf{(C)}, and the perplexity improves \textbf{(D)}.
    }
    \label{figure:result_lm_d128_l3}
\end{figure*}

\begin{figure*}
    \centering
    \includegraphics[width=0.35\textwidth]{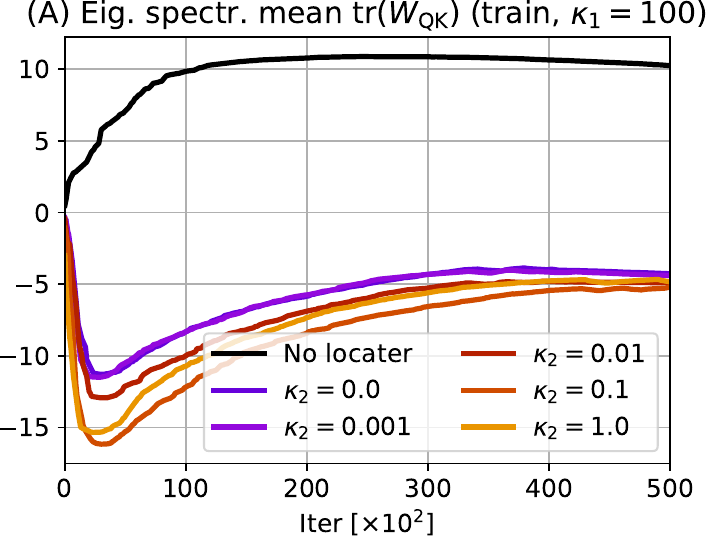}
    \includegraphics[width=0.35\textwidth]{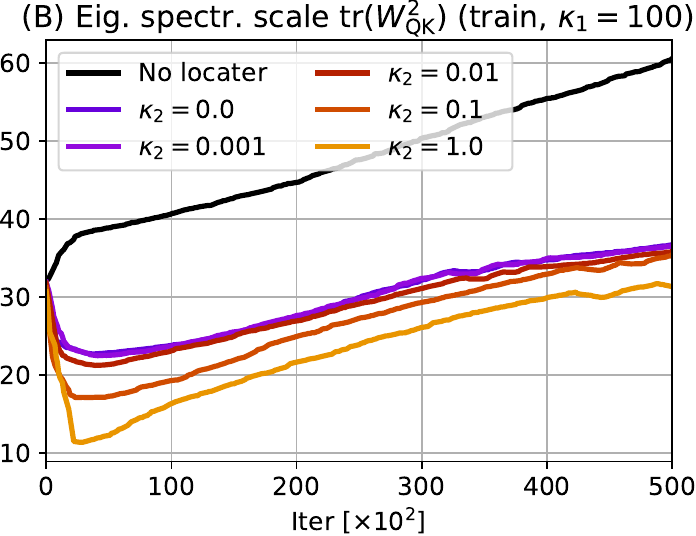} \\
    \includegraphics[width=0.35\textwidth]{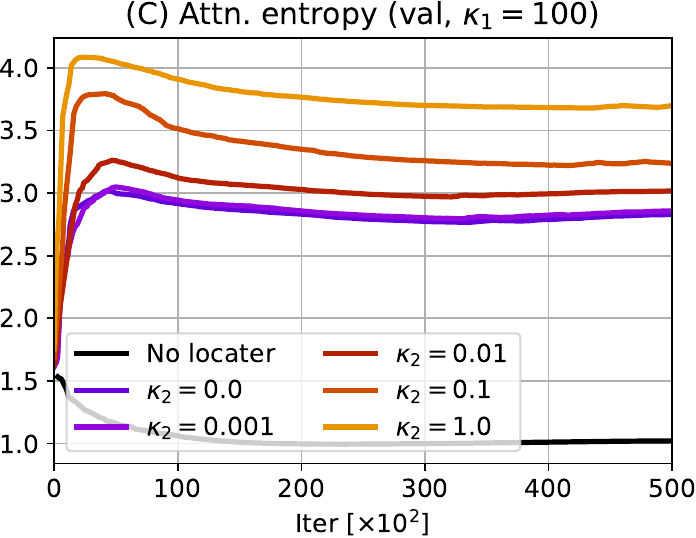}
    \includegraphics[width=0.35\textwidth]{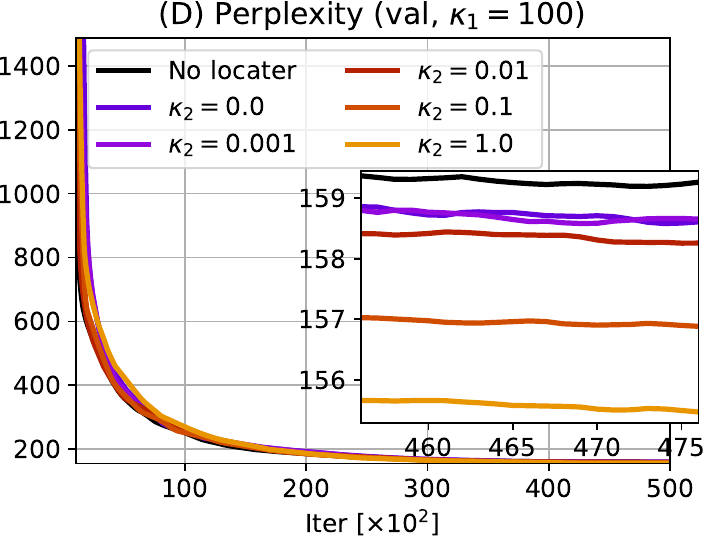}
    \caption{
        Experimental results of language modeling (WikiText-2) with $d=128$ with $6$-layers transformers, fixed $\kappa_1=100$, and varying regularization intensity $\kappa_2$.
        With stronger $\kappa_2$, the eigenspectrum scale shrinks \textbf{(B)}, the attention entropy increases \textbf{(C)}, and the perplexity improves \textbf{(D)}.
    }
    \label{figure:result_lm_d128_l6}
\end{figure*}

Here, we show additional results of the language modeling task with $1$-/$3$-/$6$-layer transformers with different embedding dimensions $d=32,128$.
For $d=128$, the configurations remain the same except for the number of decoder layers as in \cref{section:experiments}.
For $d=32$, we used the learning rate $0.0001$ (instead of $0.000025$ used for $d=128$), and the other configurations remain the same.
The results are shown in \cref{figure:result_lm_d32_l1} ($d=32$, $1$-layers), \cref{figure:result_lm_d32_l3} ($d=32$, $3$-layers), \cref{figure:result_lm_d32_l6} ($d=32$, $6$-layers), \cref{figure:result_lm_d128_l3} ($d=128$, $3$-layers), and \cref{figure:result_lm_d128_l6} ($d=128$, $6$-layers).
The overall trends are quite similar to the case of $1$-layer transformers with $d=128$ as seen in \cref{figure:result_lm_d128_l1}:
As $\kappa_2$ increases, the eigenspectrum scale decreases, the attention entropy increases, and eventually, the perplexity improves (namely, decreases).

\end{document}